\newtheorem{remark}{\hspace{0pt}\bf Remark}
\newtheorem{theorem}{\hspace{0pt}\bf Theorem}
\newtheorem{coro}{\hspace{0pt}\bf Corollary}
\newtheorem{prop}{\hspace{0pt}\bf Proposition}
\newcommand\redsout{\bgroup\markoverwith{\textcolor{red}{\rule[0.5ex]{2pt}{0.4pt}}}\ULon}
\DeclareMathOperator*{\argmax}{argmax}
\definecolor{mygreen}{rgb}{0.10,0.50,0.10}
\title{\LARGE \bf
A Bi-Level Optimization Approach to Joint Trajectory Optimization for Redundant Manipulators
}
\author{Jonathan Fried and Santiago Paternain
\thanks{The authors are with the Department of Electrical, Computer, and Systems Engineering, Rensselaer Polytechnic Institute. Email: \{friedj2, paters\}@rpi.edu }
}
\begin{document}

\maketitle

\begin{abstract}

In this work, we present an approach to minimizing the time necessary for the end-effector of a redundant robot manipulator to traverse a Cartesian path by optimizing the trajectory of its joints. Each joint has limits in the ranges of position, velocity and acceleration, the latter making jerks in joint space undesirable. The proposed approach takes this nonlinear optimization problem whose variables are path speed and joint trajectory and reformulates it into a bi-level problem. The lower-level formulation is a convex subproblem that considers a fixed joint trajectory and maximizes path speed while considering all joint velocity and acceleration constraints. Under particular conditions, this subproblem has a closed-form solution. Then, we solve a higher-level subproblem by leveraging the directional derivative of the lower-level value with respect to the joint trajectory parameters. In particular, we use this direction to implement a Primal-Dual method that considers the path accuracy and joint position constraints. We show the efficacy of our proposed approach with simulations and experimental results.

\end{abstract}

\section{Introduction}

For several industrial tasks performed by serial robots, the Cartesian path to be followed by the end-effector generally requires fewer degrees of freedom than what the manipulator possesses. In many applications, such as drilling \cite{yundou2018}, 3D printing \cite{mehmet2019}, soldering \cite{owen2022}, deep rolling \cite{wen2020}, or spray coating \cite{heping2008}, rotation around the attached tool center point is considered a free axis, resulting in a kinematically redundant system \cite{conkur_1997}, \cite{leger2016}. This allows adjustment of the robot pose in ways that optimize multiple metrics or complete multiple objectives. However, redundancy also makes the trajectory optimization problem in joint space complex.

Various methods for solving the problem of kinematic redundancy have been studied in the literature, both from control and motion planning perspectives. From a control standpoint, \cite{jasour2009,jin2023,zhang2023} use Model Predictive Control to minimize metrics such as velocity-norm, acceleration-norm, or tracking error. \cite{hou2012} uses manipulability and multi-preview control to complete trajectory tracking and collision avoidance with objects, 
taking into account the kinematic chain. In \cite{li2020}, redundancy is exploited to circumvent the particular scenario where a joint under repetitive motion and heavy load changes from a normal state to a fault. The approach is to formulate a designed observer for joint state monitoring, which is used to detect joint faults, and a primal-dual neural network solver for the resulting constrained optimization problem. \cite{kemeny2003} uses a parameterized form of the manipulator null-space to solve the redundancy, with a modular approach that allows for the incorporation of singularity avoidance or inertia parameters. Similarly, a null-space-based optimization algorithm is proposed by \cite{zanchettin2012} for redundancy resolution of a dual-arm configuration while taking into account the inverse-kinematic controllers already provided by the robot manufacturers

From a motion planning perspective, \cite{reiter2016,daniel1969,wampler1987,shugen2004,galicki2000,chiacchio1990,bobrow1985,pham2014} use a general inverse kinematics method, separating the redundancy from the trajectory optimization. A particular way of doing inverse kinematics is by using differential kinematics and joint state space decomposition. In this kind of method, the inverse kinematics problem is solved with a second-order differential inverse kinematics approach, choosing the joint torques as a variable that can be integrated to obtain the joint motion. A common characteristic of these works is that the optimization problem results in a non-convex problem due to the non-linear mapping from joint states to Cartesian space. Hence, although the problem's structure differs in these works, they rely on generic non-convex optimization solvers. The latter (e.g., \cite{byrd2000trust,waltz2006interior,gill2019practical}) are powerful in the sense that they can tackle a variety of general nonlinear problems but are unable to exploit the particularities of the path traversal time optimization problem.

To circumvent this issue,  \cite{liu2022} considers deep reinforcement learning. In particular, it proposes a structure combining traditional path planning with deep reinforcement learning. It explores both Cartesian and Joint spaces to find a length-optimal solution that also has energy-optimal inverse kinematics while avoiding obstacles. Another solution is the use of Genetic Algorithms in an attempt to avoid potential local optima and circumvent non-linearities \cite{chaoyang2021}. \cite{wu2016} uses a Genetic Algorithm for joint space trajectory planning, with the additional objective of preventing abrupt change in torque level to smooth angular jerk, or minimize the mean square value of Cartesian jerk, in order to prevent vibrations at the end-effector. In \cite{cheng2020}, a motion planning method based on a beetle antennae search algorithm is proposed to reduce the time taken by genetic algorithms. In particular, it does not use inverse or differential kinematics. While it considers variable velocity joint limits it does not take in consideration joint acceleration limits.

With a different perspective, the work done in \cite{vers2009} only considers the path tracking problem. It exploits the underlying structure to reformulate the proposed problem through an adequate change in decision variables that results in a convex formulation which can be efficiently solved with a second-order cone program-based solution. However, it does not take geometric constraints, redundancy, or the translation of Cartesian space motion into joint space motion.

In this work, we look into this problem from a motion planning perspective and optimize for minimum path traversal time under stringent requirements in terms of accuracy \cite{he2023}. In particular, we propose an algorithm that efficiently provides local optimal solutions by exploiting the structure of the joint motion optimization problem, without recursively using inverse kinematics for the optimization procedure of the joint space motion planning. In particular, we identify a bi-level formulation with a convex low-level subproblem similar to \cite{vers2009} that can be efficiently solved. The solution to this low-level problem can then be used to find local solutions to the high-level problem efficiently. Furthermore, the curve parameterization of our proposed approach allows us to modify the entire joint curve at each iteration of the algorithm, unlike previously mentioned quadratic programming solutions (\cite{he2023}) that optimize the curve from point to point. 

In Section \ref{sec_problem_formulation} we formulate a general joint motion optimization problem for a redundant robot manipulator. Next, we propose an equivalent bi-level formulation and prove the bijection between the set of optimal solution of both problems (Section \ref{sec_bilevel}). Furthermore, we establish the convexity of the low-level subproblem, a key component of the optimization method proposed by this work. It is worth pointing out that the convexity of the low-level problem is independent of the manipulator and the Cartesian path considered. We also discuss special cases where the low-level problem has closed-form solutions. In Section \ref{sec_inner_problem}, we discuss algorithms to solve the low-level (or inner) subproblem, and present methods to exploit its solution to compute directional derivatives of the objective with respect to the high-level subproblem's decision variables. Additionally, for those special cases, this directional derivative turns into a subgradient. The constant path speed case was considered in our prior work \cite{fried2024} and it is presented here for completeness. In Section \ref{sec_outer}, 
we describe how the directional derivatives obtained through the lower-level problem can be leveraged in a primal-dual method to optimize the high-level, or outer, subproblem. Section \ref{sec_numerical} presents numerical simulations inspired by a cold spraying application and comparison against general non-linear solvers, while Section \ref{sec_exp} presents experimental results obtained in a robot manipulator with comparisons between simulated and experimental results. This is followed by concluding remarks (Section \ref{sec_conclusion}).

\section{Problem Statement}\label{sec_problem_formulation}

{Consider the problem of optimizing the joint motion of a redundant robot manipulator with $n$ joints. Let $s_i$, where $s_0=0$ and $s_N=1$ with $i=0,\ldots,N$ denote the path length and 
$\chi^d_i \in \mathbb{R}^m$ with $i=0,\ldots N$ the points in the Cartesian path that the robot's end-effector must trace. Let $q_i \in \mathbb{R}^{n}$  for $i=0,\ldots N$, represent the manipulator's path in joint space. Since the robot manipulator is redundant, it has more degrees-of-freedom than required to trace $\chi^d$, i.e.,  $n>m$. Thus, multiple trajectories in joint space result in the same Cartesian path. Among these, we are interested in finding one with the minimum traversal time.

The problem of optimizing the joint path is equivalent to choosing $q$ that maximizes path velocity $\dot{s}$ or to minimize $t_f$, the travel time of the manipulator across the path $\chi^d$. Let $t_i$ with $i=0,\ldots N-1$ denote the time interval needed to go from path point $s_i$ to path point $s_{i+1}$. Then, we can write $t_f$ and its first order approximation as 
\begin{equation}
\label{eq:objective}
t_f = \sum_{i=0}^{N-1} t_i \approx \sum_{i=0}^{N-1} \frac{\Delta s_i}{\dot{s}_i},
\end{equation}
 where $\Delta s_i=s_{i+1}-s_i$.
Note that the objective does not depend on $\dot{s}_N$, since the path is completed at point $N$. Thus, the path speed at that point does not contribute to the total time. 

Each joint $j=1,\ldots,n$ of the manipulator operates under position, velocity, and acceleration constraints, given by 
\begin{equation}
    \underline{q}_j \leq q_{ij} \leq \overline{q}_j,\;\dot{\underline{q}}_j \leq \dot{q}_{ij} \leq \dot{\overline{q}}_j\;\mbox{and}\;    \ddot{\underline{q}}_j \leq \ddot{q}_{ij} \leq \ddot{\overline{q}}_j,
    \label{eq:joint_restrictions}
\end{equation}
where $q_{ij}$ is the position of joint $j$ at path point $s_i$, $\forall \, j =\{1,\ldots,n\}$ and $\forall \,i =\{0,\ldots,N\}$. We approximate the joints' trajectory $q_{ij}$ linearly with respect to vectors of parameters $\theta_j \in \mathbb{R}^d$ for all $j=1,\ldots{n}$ so that 
\begin{equation}
    q_{ij} = p(s_i)\theta_j,
    \label{eq:paramterization}
\end{equation}
where $p(s_i) \in \mathbb{R}^{1\times d}$ is a twice differentiable vector basis in $s$. This assumption is common in the literature (see e.g., \cite{koubiaspline,zhu2022,embry2018}). Applying the chain rule, the first derivative of the joint position with respect to the time yields
\begin{equation}\label{q_dot}
\dot{q}_{ij} = q_{ij}^{\prime}(s_i)\dot{s}_i = p^{\prime}(s_i)\theta_j \dot{s}_i.
\end{equation}
Likewise, its second derivative is given by
\begin{equation}
\label{eq:sec_der}
\ddot{q}_j = q_{ij}^{\prime\prime}\dot{s}_i^2+q_{ij}^{\prime}\ddot{s}_i = p^{\prime\prime}(s_i)\theta_j \dot{s}_i^2+p^{\prime}(s_i)\theta_j\ddot{s}_i.
\end{equation}
We define for simplicity the following notation $p_i\!\!=\!\!p(\!s_i\!), p^\prime_i\!\!=\!\!p^\prime\!(\!s_i\!), p^{\prime\prime}_i\! =\!p^{\prime\prime}\!(\!s_i\!)$, which we will use for the remainder of this work. 
Note that, the path velocities $\dot{s}_i$ and accelerations $\ddot{s}_i$ are not independent. Assuming that between points in the path the acceleration is constant, we can write the velocity as 
\begin{equation}
\label{eq:speed_dynamic}
    \dot{s}_{i+1} = \dot{s}_i + \ddot{s}_i t_i.
\end{equation}
 Under the same assumptions the path dynamics take the form
\begin{equation}
\label{eq:path_dynamic}
    s_{i+1} = s_i + \dot{s}_i t_i + \ddot{s}_i \frac{t_i^2}{2}.
\end{equation}

For a given joint pose $q_i$, the Cartesian point $\chi_i$ of the end-effector is a direct result of its kinematics. Let $k:\mathbb{R}^n\to\mathbb{R}^m$ represent the forward kinematics of the manipulator that map $q$ to $\chi$. 
Then, the robot manipulator is subject to the following kinematic constraints \cite{siciliano2007}
\begin{equation}
    \chi_i = k(q_i),\;\;\forall i=0,\ldots,N,
    \label{eq:fwd_kin}
\end{equation}
where, $q_i$ is a vector containing all the joints of the manipulator. Formally, $q_i\!=\!\left(\!I_n\otimes p_i\!\right)\theta$, where $\otimes$ denotes the Kronecker product, $I_n$ is a $n\times n$ identity matrix and $\theta = \left[\theta_1^T,\ldots,\theta_n^T\right]^T$. In what follows, we omit the parenthesis since, given the dimensions of the matrices, this order of operations is the only possible one. Then, the error between the desired and the actual Cartesian path is given by
\begin{equation}
    \label{eq:error}
    E(\theta) = \left(\sum_{i=0}^N \left\|\chi^d_i-k(I_n\otimes p_i\theta) \right\|_2^p\right)^{\frac{1}{p}},\end{equation}
where $E(\theta)$ is the $p-$norm of the path error. 
 With this, we are now able to formalize the joint and speed optimization problem that minimizes the travel time $t_f$

\begin{mini}|s|
  {\stackrel{t,\dot{s}\in \mathbb{R}_+^N,}{\stackrel{\ddot{s}\in\mathbb{R}^{N-1},}{\stackrel{\theta \in \mathbb{R}^{nd}}{}}
  }
  }
  { \sum_{i=0}^{N-1} \frac{\Delta s_i}{\dot{s}_i}}{\label{opt:generaldiscrete}}{{t_f}^\star=}
   \addConstraint{}{}{E(\theta)\leq \epsilon}
 \addConstraint{}{}{\underline{q}_j \leq p_i\theta_j\leq \overline{q}_{j}}  
  \addConstraint{}{}{\dot{\underline{q}}_j \leq p_i^\prime\theta_j\dot{s}_i\leq \dot{\overline{q}}_{j}}
  \addConstraint{}{}{\ddot{\underline{q}}_{j} \leq p^{\prime\prime}_i\theta_j \dot{s}^2_i + p^{\prime}_i\theta_j \ddot{s}_{i}\leq \ddot{\overline{q}}_{j}}
  \addConstraint{}{}{\dot{s}_{i+1} = \dot{s}_i + \ddot{s}_i t_i}
  \addConstraint{}{}{s_{i+1} = s_i + \dot{s}_i t_i + \ddot{s}_i \frac{t_i^2}{2}}
 \end{mini}
%
%
where $E(\theta)$ is the error on the Cartesian as defined in \eqref{eq:error} and $\epsilon$ a tolerance on it. In the previous expressions we have omitted that the constraints need to hold for all $i=\left\{0,\ldots, N\right\}$ and $j\! =\! \left\{1,\ldots,n\right\}$.

The problem at hand has two distinct type sources of non-convexity, the nonlinear relationship between joint and Cartesian spaces (which ultimately results in the function $E(\theta)$ being nonconvex), and the kinematic and dynamic constraints involving several products of the different decision variables. Although powerful generic non-convex optimization tools exist~\cite{byrd2000trust,waltz2006interior,gill2019practical}, in general they take a long time to converge. 

In this work, we propose to exploit the structure of the problem to develop efficient algorithms that obtain comparable results to the non-convex solvers in less computational time (see Section \ref{sec_comparison}). The structure to be exploited is that the Cartesian path constraints and the joint limits depend only on $\theta$ and do not depend on the kinematic and dynamic variables $\dot{s}_i,\ddot{s}_i,t_i$. Indeed, they depend on the position of the manipulator's joints but not on its velocity. On the other hand, the dynamic and kinematic constraints depend on both decision variables. The interpretation is that that velocity and acceleration constraints depend not only on how fast the path is traced but also on the slopes and curvature each joint will need to perform. 

This observation suggests that with an adequate change of decision variables, similar to \cite{vers2009}, we can obtain a bi-level structure in which the lower-level solves the path velocity $\dot{s}_i$ and the higher-level contains the path and joint position constraints of the original problem. Notably, the inner problem is convex. This guarantee is also independent of the complexity of the task at hand, i.e., the path that needs to be traced and the characteristics of the manipulator. Formalizing this formulation and its properties is the subject of the next section.

\section{A Bi-level Formulation with Convex Low-Level Problem}\label{sec_bilevel}

In this section, we propose a bi-level problem with a convex low-level subproblem with the same solution as \eqref{opt:generaldiscrete}. We start by defining the lower-level problem

\begin{mini}|s|
  {\dot{s}_i^2\geq 0}{ \left(\sum_{i=0}^{N-1} \frac{\Delta s_i}{\sqrt{\dot{s}^2_i}}\right)^2}{\label{opt:inner}}{V(\theta)=}
  \addConstraint{-\dot{\underline{q}}_{j}^2}{\leq \mbox{sign}(p_i^\prime\theta_j)(p_i^\prime\theta_j)^2\dot{s}_i^2 \leq \dot{\overline{q}}_{j}^2}{}
\addConstraint{\ddot{\underline{q}}_{j} }{\leq \rho_{ij}\dot{s}_i^2 + \frac{p^{\prime}_i\theta_j}{2\Delta s_i} \dot{s}^2_{i+1}\leq \ddot{\overline{q}}_{j}}{},
\end{mini}
where $\rho_{ij} = \left(p^{\prime\prime}_i\theta_j - p^{\prime}_i\theta_j/2\Delta s_i\right)$. As in \eqref{opt:generaldiscrete} the constraints need to be satisfied for all $j=1,\ldots,n$ and for all $i=0,\ldots,N$. Note that $V(\theta)$ is the square of the optimal traverse time for a given trajectory defined by $\theta$. 

Then, let us define the upper-level problem 
\begin{mini}|s|
  {\theta \in \mathbb{R}^{nd}}{ V(\theta)}{\label{opt:outer}}{({t_f}^\star)^2=}
  \addConstraint{E(\theta)}{ \leq \epsilon}{}
  \addConstraint{\underline{q}_j }{\leq p_i\theta_j\leq \overline{q}_{j}}{}  \end{mini}
where in the previous expression the second constraint needs to hold for $i=0,\ldots, N$ and $j=1,\ldots,n$. 

The structure proposed here follows the intuition discussed in Section \ref{sec_problem_formulation}, where the upper-level problem is independent of the kinematic and dynamic variables and the lower-level problem minimizes the traversal time given a trajectory to respect the manipulator's limits.

It is worth pointing out that the proposed formulation is independent of the task at hand, i.e., the characteristics of the manipulator and the geometry of the curve to be traced. Indeed, the lower-level problem applies to any optimal time traversal time and imposes generic kinematic and dynamic constraints on the joints of the manipulator. 

In addition, observe that instead of solving for $t_f$ we consider $t_f^2$ as the objective. This modification does not affect the solution of the problem, as both time and path velocity are positive, and it makes the function $V(\theta)$ smoother and in some cases convex (See Theorems \ref{thm:inner}, \ref{thm:inner_cte}, \ref{thm:inner_noacc} and Remark \ref{rmk:convex}).

Note that we are setting the decision variable as $\dot{s}_i^2$ instead of $\dot{s}_i$. Likewise, the times $t_i$ and accelerations $\ddot{s}_i$ have been removed. This allows for a reduction in the number of variables and constraints. Furthermore, the lower-level problem is convex. This is the subject of the next proposition. 
\begin{prop}\label{prop_convex_problem}
    Problem \eqref{opt:inner} is a convex optimization problem for all $\theta\in\mathbb{R}^{nd}$. Moreover, its solution is unique.
\end{prop}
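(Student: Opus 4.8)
The plan is to establish convexity by showing that the objective and the feasible set defined by the constraints in \eqref{opt:inner} are both convex in the decision variable $x_i := \dot{s}_i^2 \geq 0$, and then to argue uniqueness separately from the strict convexity of the objective. The key conceptual point, which makes the reformulation work, is that once we adopt $x_i = \dot{s}_i^2$ as the variable, every constraint becomes \emph{linear} in the $x_i$'s and the objective becomes a convex function of them. I would emphasize at the outset that $\theta$ is fixed in the lower-level problem, so all quantities involving $\theta$ — namely $p_i^\prime\theta_j$, $\rho_{ij}$, and $\mbox{sign}(p_i^\prime\theta_j)$ — are constants, and the problem is genuinely a function of $x$ alone.

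First I would handle the constraints. The velocity constraint $-\dot{\underline{q}}_j^2 \leq \mbox{sign}(p_i^\prime\theta_j)(p_i^\prime\theta_j)^2 x_i \leq \dot{\overline{q}}_j^2$ is a pair of affine (indeed linear) inequalities in $x_i$, since the coefficient $\mbox{sign}(p_i^\prime\theta_j)(p_i^\prime\theta_j)^2$ is a fixed scalar. Similarly, the acceleration constraint $\ddot{\underline{q}}_j \leq \rho_{ij} x_i + \frac{p_i^\prime\theta_j}{2\Delta s_i} x_{i+1} \leq \ddot{\overline{q}}_j$ is affine in the pair $(x_i, x_{i+1})$, with fixed coefficients $\rho_{ij}$ and $p_i^\prime\theta_j/(2\Delta s_i)$. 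Together with the nonnegativity constraint $x_i \geq 0$, the feasible set is therefore a polyhedron, hence convex. This is the step I expect to be essentially routine once the change of variables is made explicit.

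The main obstacle is the objective $V(\theta) = \left(\sum_{i=0}^{N-1} \Delta s_i / \sqrt{x_i}\right)^2$, for which I would need to prove convexity in $x = (x_0,\ldots,x_{N-1}) \in \mathbb{R}_{++}^N$. My approach would be compositional: for each fixed $i$, the map $x_i \mapsto \Delta s_i / \sqrt{x_i} = \Delta s_i\, x_i^{-1/2}$ is convex and decreasing on $(0,\infty)$ (its second derivative is a positive multiple of $x_i^{-5/2}$), so the inner sum $g(x) := \sum_i \Delta s_i x_i^{-1/2}$ is a sum of convex functions and hence convex; it is also nonnegative since $\Delta s_i > 0$ and $x_i > 0$. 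Then I would invoke the standard composition rule: if $g$ is convex and nonnegative and $h(u) = u^2$ is convex and nondecreasing on $u \geq 0$, then $h \circ g$ is convex. This yields convexity of $V$. I would note that this is precisely why the reformulation squares the traversal time — it both removes the square root creating difficulties and preserves convexity through the composition.

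For uniqueness, I would argue strict convexity of the objective on the feasible set. Each term $\Delta s_i x_i^{-1/2}$ is \emph{strictly} convex in $x_i$, so $g$ is strictly convex in $x$; composing with the strictly increasing convex $h(u)=u^2$ preserves strict convexity on the positive orthant (a short argument along any nontrivial segment, using that $g$ is strictly convex so $h(g(\cdot))$ strictly exceeds the chord). A strictly convex function attains its minimum over a convex feasible set at no more than one point, giving uniqueness. The one subtlety I would flag is that the objective is only defined (and differentiable) for $x_i > 0$, so I would confirm that the optimal solution lies in the interior $x_i > 0$ rather than on the boundary $x_i = 0$; this is immediate because $V \to \infty$ as any $x_i \to 0^+$, so the minimizer cannot have a vanishing component, and the strict-convexity argument applies on the relevant open region. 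Assembling these pieces — linear constraints, convex-composition objective, strict convexity for uniqueness — completes the proof.
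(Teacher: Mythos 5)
Your proposal is correct and follows essentially the same route as the paper's proof: the change of variables $x_i=\dot{s}_i^2$ renders the constraints linear, the objective is shown convex by composing the convex nonnegative sum $\sum_i \Delta s_i x_i^{-1/2}$ (positive second derivatives) with the nondecreasing convex square, and uniqueness follows from strict convexity over the convex feasible set. If anything, you are slightly more careful than the paper in spelling out that strict convexity survives the composition with $u\mapsto u^2$ and that the minimizer lies in the interior $x_i>0$, but these are refinements of the same argument rather than a different approach.
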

\begin{proof}
   Notice that all the constraints are linear when the variable considered is $\dot{s}_i^2$. Hence, we are left to show that the objective in \eqref{opt:inner} is a convex function. Since $y^2$ is a convex and non-decreasing function for $y\geq 0$, to establish the convexity of the objective it suffices to show the convexity of $f: \mathbb{R}^{N}_+ \to \mathbb{R}_+$, with $f$ defined as follows
   \begin{equation}
       f(x) = \sum_{i=1}^{N} \frac{a_i}{\sqrt{x_i}},
   \end{equation}
for $a_i>0$ for all $i=1,\ldots, N$. This fact follows from the composition of convex functions~\cite[Section 3.2.4]{boyd2004convex}. First note that the function $f(x)$ is indeed non-negative. To establish its convexity, note that the Hessian of $f$ is a diagonal matrix with 
\begin{equation}
    (\nabla^2 f(x))_{ii} = \frac{3 a_i}{4} x_i^{-5/2}\; \forall i=1,\ldots N.
\end{equation}
   Since $x_i>0$, the Hessian is positive definite. Thus, $f(x)$ is convex. The uniqueness of the solution follows from the fact that the objective is strictly convex. We argue by contradiction. Assume that there exist two optimal solutions $x^\star_1,x^\star_2$ such that $x^\star_1 \neq x^\star_2$.  It follows that for any $\alpha \in[0,1]$, $\alpha x^\star_1+(1-\alpha)x^\star_2$ is feasible since the set of optimal solutions is convex. Furthermore, since $f(x)$ is strictly convex it holds that
   \begin{equation}
       f(\alpha x^\star_1+(1-\alpha) x^\star_2) < \alpha f(x^\star_1) +(1-\alpha) f(x^\star_2) = f(x^\star_1),
   \end{equation}
   where the last equality follows from the fact that $x^\star_1$ and $x^\star_2$ are optimal and thus they have the same value. Note that the above expression establishes that $\alpha x^\star_1+(1-\alpha)x^\star_2$ has a lower value than the optimal, which is a contradiction. This completes the proof of the uniqueness of the solution.
\end{proof}

The main advantage of this proposed separation is that the inner subproblem can be solved efficiently through convex optimization tools e.g., CVX (\cite{cvx},\cite{gb08}), YALMIP \cite{Lofberg2004}, or CDSP \cite{borchers1999}.  Under some particular conditions, the inner problem can be solved even in closed form, and its value is reduced to a convex function depending only on $\theta$ (Section \ref{sec_inner_problem}). Thus, resulting in further efficiency in the computation of the solution. For the proposed formulation to be useful, it needs to solve problem \eqref{opt:generaldiscrete}. This is the subject of the next proposition.
\begin{prop}
{There exists a bijection between the optimal solutions to the bi-level problem \eqref{opt:inner}--\eqref{opt:outer} and the optimal solutions to \eqref{opt:generaldiscrete}. } 
\end{prop}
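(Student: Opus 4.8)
The plan is to establish the bijection by constructing explicit maps in both directions between feasible (and then optimal) points of the two formulations, and showing these maps preserve feasibility and objective value. The key observation is that the decision variables differ only by a reversible change of variables: the bi-level problem uses $w_i := \dot{s}_i^2$ and eliminates $t_i$ and $\ddot{s}_i$, whereas \eqref{opt:generaldiscrete} carries $t, \dot{s}, \ddot{s}, \theta$ explicitly. So first I would define the forward map. Given a feasible point $(t,\dot{s},\ddot{s},\theta)$ of \eqref{opt:generaldiscrete}, send it to $(\dot{s}_i^2,\theta)$. I must check that the squared velocities $\dot{s}_i^2$ together with $\theta$ satisfy the constraints of \eqref{opt:inner} and that $\theta$ satisfies the upper-level constraints of \eqref{opt:outer}. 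The upper-level constraints $E(\theta)\le\epsilon$ and $\underline{q}_j\le p_i\theta_j\le\overline{q}_j$ transfer verbatim since they are literally the first two constraints of \eqref{opt:generaldiscrete}. The velocity-limit and acceleration-limit constraints require the substitution work: I would use \eqref{eq:speed_dynamic} and \eqref{eq:path_dynamic} to eliminate $t_i$ and $\ddot{s}_i$ in favor of $\dot{s}_i^2$ and $\dot{s}_{i+1}^2$, verifying that the acceleration constraint in \eqref{opt:generaldiscrete} becomes exactly the $\rho_{ij}$ constraint in \eqref{opt:inner}.

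The crucial algebraic step is this elimination. From \eqref{eq:path_dynamic} with $\Delta s_i = \dot{s}_i t_i + \ddot{s}_i t_i^2/2$ and \eqref{eq:speed_dynamic} giving $\ddot{s}_i t_i = \dot{s}_{i+1}-\dot{s}_i$, one derives $\Delta s_i = (\dot{s}_i+\dot{s}_{i+1})t_i/2$, hence $t_i = 2\Delta s_i/(\dot{s}_i+\dot{s}_{i+1})$ and $\ddot{s}_i = (\dot{s}_{i+1}^2-\dot{s}_i^2)/(2\Delta s_i)$. Substituting $\ddot{s}_i$ into the acceleration constraint $\ddot{\underline{q}}_j \le p_i''\theta_j\dot{s}_i^2 + p_i'\theta_j\ddot{s}_i \le \ddot{\overline{q}}_j$ and collecting the $\dot{s}_i^2$ terms yields precisely the coefficient $\rho_{ij} = p_i''\theta_j - p_i'\theta_j/(2\Delta s_i)$ on $\dot{s}_i^2$ and the coefficient $p_i'\theta_j/(2\Delta s_i)$ on $\dot{s}_{i+1}^2$, matching \eqref{opt:inner}. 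Likewise the velocity constraint $\dot{\underline{q}}_j\le p_i'\theta_j\dot{s}_i\le\dot{\overline{q}}_j$ must be shown equivalent to the sign-based squared form; here I would note that squaring a two-sided bound on a signed quantity produces the $\mbox{sign}(p_i'\theta_j)(p_i'\theta_j)^2\dot{s}_i^2$ expression, using $\dot{s}_i\ge 0$. For the objective, since $\dot{s}_i>0$ the map $\dot{s}_i\mapsto\dot{s}_i^2$ is a bijection of $\mathbb{R}_+$ onto itself and $t_f^2 = \left(\sum \Delta s_i/\dot{s}_i\right)^2 = \left(\sum \Delta s_i/\sqrt{\dot{s}_i^2}\right)^2 = V$-objective evaluated at $\dot{s}_i^2$, so objective values correspond.

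For the reverse direction, given an optimal $\theta^\star$ of \eqref{opt:outer} and the associated unique minimizer $\{\dot{s}_i^{2\star}\}$ of \eqref{opt:inner} (uniqueness from Proposition \ref{prop_convex_problem}), I would reconstruct $\dot{s}_i^\star = \sqrt{\dot{s}_i^{2\star}}$, then recover $t_i^\star$ and $\ddot{s}_i^\star$ via the closed forms above, and verify this reconstructed tuple is feasible and optimal for \eqref{opt:generaldiscrete}. The two maps are mutually inverse by construction. The remaining point is to argue that optimality is preserved, not merely feasibility: because the bi-level value is defined as $V(\theta)=\min_{\dot{s}^2}(\cdots)$ and the outer problem minimizes $V$ over the same $\theta$-constraints as \eqref{opt:generaldiscrete}, a solution of \eqref{opt:generaldiscrete} cannot beat the bi-level optimum and vice versa; I would make this a short two-sided inequality argument on the optimal values.

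I expect the main obstacle to be the velocity constraint equivalence rather than the acceleration one. The acceleration elimination is a direct substitution, but the velocity constraint in \eqref{opt:inner} is written with the $\mbox{sign}(p_i'\theta_j)$ factor and an asymmetric lower bound $-\dot{\underline{q}}_j^2$, so I must be careful about the case analysis when $p_i'\theta_j$ changes sign and confirm that the squared, sign-adjusted inequality is genuinely equivalent to the original linear two-sided bound on $p_i'\theta_j\dot{s}_i$ for every sign of $p_i'\theta_j$ and all admissible bounds (including mixed-sign velocity limits). A secondary subtlety is confirming the bijection is well-defined at the level of optimal sets, which hinges on the uniqueness of the inner minimizer guaranteed by Proposition \ref{prop_convex_problem}; without that uniqueness the reverse map could be multivalued.
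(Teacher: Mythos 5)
Your proposal is correct and follows essentially the same route as the paper's proof: eliminate $t_i$ and $\ddot{s}_i$ through the dynamics constraints to obtain the key identity $\ddot{s}_i = \bigl(\dot{s}_{i+1}^2-\dot{s}_i^2\bigr)/(2\Delta s_i)$, transfer the remaining constraints verbatim, and conclude by noting the two objectives are related by a monotone transformation. If anything, you are more careful than the paper on two points it treats as trivial---the sign-based equivalence of the squared velocity constraint (which indeed needs $\dot{\underline{q}}_j<0<\dot{\overline{q}}_j$) and the explicit reconstruction of $(t^\dagger,\ddot{s}^\dagger)$ in the reverse direction via the uniqueness of the inner minimizer.
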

\begin{proof}
Let $(\dot{s}^G,\ddot{s}^G,t^G,\theta^G)$ be a feasible solution to \eqref{opt:generaldiscrete}. We will establish that $(\dot{s}^G,\theta^G)$ is feasible for \eqref{opt:inner}--\eqref{opt:outer}. Since the constraints in \eqref{opt:outer} do not depend on $\dot{s}$ and take the same form as the first two constraints in \eqref{opt:generaldiscrete} it follows that $(\dot{s}^G,\theta^G)$ is feasible for \eqref{opt:outer}. To prove that $(\dot{s}^G,\theta^G)$ is feasible for \eqref{opt:inner} it suffices to show that 
\begin{equation}
\label{eq:speed_dynamic}
    \ddot{s}^G_{i} = \frac{(\dot{s}^G_{i+1})^2 - (\dot{s}^G_{i})^2}{2\Delta s_{i}^G}.
\end{equation}
If the previous claim holds, then the second constraint in \eqref{opt:inner} holds for $(\dot{s}^G,\theta^G)$. The first constraint of \eqref{opt:inner} is satisfied by $(\dot{s}^G,\theta^G)$ trivially since the constraint is the same as the third constraint in \eqref{opt:generaldiscrete}. 

We turn our attention to establishing \eqref{eq:speed_dynamic}. We will analyze the claim in two particular cases. (i) Assume that $\ddot{s}_i^G = 0$, then it follows from the fifth constraint in \eqref{opt:generaldiscrete} that $\dot{s}^G_i= \dot{s}^G_{i+1}$. Thus proving \eqref{eq:speed_dynamic}. (ii) If $\ddot{s}_i^G \neq 0$, it follows from the same constraint that 
\begin{equation}
    t_i^G = \frac{\dot{s}_{i+1}^G-\dot{s}_i^G}{\ddot{s}^G_i}.
\end{equation}
Substituting the previous expression in \eqref{eq:path_dynamic} yields \eqref{eq:speed_dynamic}.

Using a similar argument, one can establish that for any solution to the bi-level problem, $(\dot{s}^B,\theta^B)$, there exists $(\ddot{s}^\dagger,t^\dagger)$ so that $(\dot{s}^B,\ddot{s}^\dagger,t^\dagger,\theta^B)$ is a solution to \eqref{opt:generaldiscrete}.

To complete the proof that the two problems are equivalent it suffices to notice that the ordering imposed by the objectives in \eqref{opt:inner}--\eqref{opt:outer} and \eqref{opt:generaldiscrete} is the same. Indeed, the objective in the bi-level formulation is the composition of a monotonic function with the objective function in \eqref{opt:generaldiscrete}.

\end{proof}
Having confirmed that the bi-level formulation recovers the solutions to the problem \eqref{opt:generaldiscrete}, we can take advantage of this structure to solve the problem at hand. This is the subject of the next section, where we exploit the solutions to the inner problem to compute subgradients or gradients of $V(\theta)$ which are used in Section \ref{sec_outer} to solve \eqref{opt:outer}. {Before doing so, in Sections \ref{subsec_problem_formulation_cv} and \ref{subsec_problem_formulation_noacc} we discuss two particular cases of the problem where the solution of the inner problem can be computed in closed form.}

\subsection{Constant Path Velocity Case}\label{subsec_problem_formulation_cv}

Consider a case with another restriction to its dynamics, where the path must also be traversed at constant velocity $\dot{s}$, i.e, $\ddot{s}_i = 0$. This restriction is common for some industrial activities, such as soldering, spray coating, or 3D printing. 

 In this case, optimizing the joint path is equivalent to choosing $q$  that achieves the maximum constant path velocity (constant $\dot{s}$) to minimize $t_f>0$. Particularly, we can reduce equation \eqref{eq:sec_der} into the following expression
\begin{equation}\label{q_ddot_cte}
\ddot{q}_{ij} = p^{\prime\prime}_i\theta_j \dot{s}^2.
\end{equation}
Substituting \eqref{q_dot} and \eqref{q_ddot_cte} in \eqref{eq:joint_restrictions} yields the following constraints
\begin{equation}\label{eq:param_vel_restriction_cte}
    \dot{\underline{q}}_j \leq  p'_i\theta_j\dot{s} \leq \dot{\overline{q}}_{j},\; \ddot{\underline{q}}_{j} \leq  p''_i\theta_j\dot{s}^2  \leq \ddot{\overline{q}}_{j}.
\end{equation}

Additionally, the objective is also simplified. As $s_0 = 0$ and $s_N = 1$, if $\dot{s}$ is constant, we can express $t_f$ as

\begin{equation}
    \label{eq:objective_constant}
    t_f = \frac{1}{\dot{s}}
\end{equation}

In this scenario the lower-level problem \eqref{opt:inner} reduces to
\begin{mini}|s|
 {\dot{s}^2\geq 0}{\frac{1}{\dot{s}^2}}{\label{opt:inner_cte}}{V(\theta) =}
  \addConstraint{-\dot{\underline{q}}_{j}^2}{\leq \mbox{sign}(p_i^\prime\theta_j)(p_i^\prime\theta_j)^2\dot{s}^2 \leq \dot{\overline{q}}_{j}^2\;}{}
 \addConstraint{\ddot{\underline{q}}_{j} }{\leq p''_i\theta_j\dot{s}^2 \leq \ddot{\overline{q}}_{j},\;}{} 
\end{mini}
where the constraints need to hold for all $i = 0, \dots, N$ and $j = 1, \dots, n$. 

\subsection{No Acceleration Constraints Case}\label{subsec_problem_formulation_noacc}

{Consider a case where we can disregard the acceleration constraints of the robot joints, i.e. the robot manipulator will typically reach its speed limit first for any continuous path in joint space. This particular case is reasonable for industrial robots under a light load and with fast internal controllers (\cite{zanchettin2011,giles2021,liu2023}). 

 In this case, optimizing the joint path is equivalent to choosing $q$  that achieves maximum velocity for each path point $i$ to minimize $t_f>0$. Particularly, we can remove the constraint given by equation \eqref{eq:sec_der}, reducing Problem \eqref{opt:generaldiscrete} into
\begin{mini}|s|
  {\dot{s}_i\geq 0}{ \left(\sum_{i=0}^{N-1} \frac{\Delta s_i}{\sqrt{\dot{s}^2_i}}\right)^2}{\label{opt:inner_noacc}}{\!\!\!\!\!\!\!\!\!\!\!\!\!\!\!\!\!\!\!\!\!\!V(\theta)=}
  \addConstraint{-\dot{\underline{q}}_{j}^2}{\leq \mbox{sign}(p_i^\prime\theta_j)(p_i^\prime\theta_j)^2\dot{s}_i^2 \leq \dot{\overline{q}}_{j}^2,}{}
\end{mini}
where the constraint need to hold for all $i = 0, \dots, N$ and $j = 1, \dots, n$

In the following sections, we discuss how to solve the inner and outer subproblems. First, we discuss the inner subproblem, its solution, special cases, and how to find a descent direction of its value $V(\theta)$ with respect to $\theta$. Afterwards, we discuss the outer subproblem and how to use the results of the previously computed inner subproblem to improve the path time $t_f$.

\section{Inner Subproblem Optimization}\label{sec_inner_problem}

This problem can be solved efficiently since it is a convex optimization problem (see Proposition \ref{prop_convex_problem}). In addition, by solving it we also remove from the outerproblem the $2\times n \times(2N+1)$ constraints that correspond to velocity and acceleration. Furthermore, we show that the solution of the low-level problem can be used to efficiently compute the derivative of $V(\theta)$ with respect to $\theta$, which is needed to optimize the higher problem. These aspects result in an improvement in terms of the performance of our algorithm in comparison with standard general purpose solvers (see Section \ref{sec_numerical}). We structure this section in three parts discussing each one of the cases in Section \ref{sec_bilevel}.

\subsection{General Case}

We begin this section by rewriting~\eqref{opt:inner} in a compact form that emphasizes the linearity of the constraints with respect to $\dot{s_i}^2$. To do so, we start by defining the vectors $\dot{s}^2=[\dot{s}_0^2,\dots,\dot{s}_N^2]^T$, $q_\lim = [[\dot{\overline{q}}_{1}^2,\dot{\overline{q}}_{2}^2,$ $\dots,\dot{\overline{q}}_{n}^2,\dot{\underline{q}}_{1}^2,\dots,\dot{\underline{q}}_{n}^2] \otimes \otimes \mathbb{1}^{1\times (N+1)}$, $[\ddot{\overline{q}}_{1},\ddot{\overline{q}}_{2},\dots,$ $\ddot{\overline{q}}_{n},$ $-\ddot{\underline{q}}_{1},$ $\dots,$ $-\ddot{\underline{q}}_{n}] \otimes \mathbb{1}^{1\times N}]^T$ $\in \mathbb{R}^{(2n(2N+1))}$.

Let $A_{sj} \in \mathbb{R}^{N+1 \times N+1}$, with $j=1,\ldots, n$ be matrices containing the velocity constraints for each joint. These are diagonal matrices whose $i$-th elements on the diagonal take the form $\mbox{sign}(p_i'\theta_j)(p_i'\theta_j)^2$. With these definitions, let $A_{spd} = \left[A_{s1},A_{s2},\cdots,A_{sn}\right]^T$ be a matrix containing the velocity constraints for all joints. Similarly, define acceleration matrices for each joint $j=1,\cdots,n$ as $A_{aj} \in \mathbb{R}^{N \times (N+1)}$ where for the main diagonal $a_{aj_{i,i}} = p^{\prime\prime}_i\theta_j - \frac{p^{\prime}_i\theta_j}{2\Delta s_i}$, for the upper diagonal $a_{aj_{i,i+1}} = \frac{p^{\prime}_i\theta_j}{2\Delta s_i}$, and $0$ otherwise, $\forall i = 0,\cdots,N-1$.}. With these definitions, let $A_{acc} = \left[A_{a1}^T,A_{a2}^T,\cdots,A_{an}^T\right]^T$ be a matrix containing the acceleration constraints for all joints. 
Let us then, define 
\begin{equation}
    A(\theta) = \left[\begin{array}{c}
        A_{spd} \\
         -A_{spd} \\
         A_{acc} \\
         -A_{acc}
    \end{array}\right],
\end{equation}
and write compactly the lower level problem \eqref{opt:inner} as
\begin{mini}|s|
  {\dot{s}^2\in \mathbb{R}_+^{N+1}}{ \left(\sum_{i=0}^{N-1} \frac{\Delta s_i}{\sqrt{\dot{s}^2_i}}\right)^2}{\label{opt:inner_lin}}{V(\theta)=}
  \addConstraint{A(\theta)}{\dot{s}^2 \leq q_\lim}{}.
\end{mini}
Recall that the constraints of the above problem represent joint velocities and accelerations. In particular, note that if the joint parameters are such that the position is constant for all joints, any $\dot{s}^2 \geq 0$ is feasible. Thus the infimum $V(\theta) = 0$ is attained by selecting $\dot{s}^2$.

{We next focus on deriving an expression for the derivative of $V(\theta)$. To do so, define the Lagrangian associated with~\eqref{opt:inner_lin} }

\begin{multline}
    \label{eq:inner_lag}
     \mathcal{L}_I (\theta,\dot{s}^2,\zeta) = \left(\sum_{i=0}^{N-1} \frac{\Delta s_i}{\sqrt{\dot{s}_i^2}}\right)^2 + \zeta(A(\theta)\dot{s}^2 - q_\lim),
\end{multline}
where $\zeta = [\zeta_1, \zeta_2, \cdots, \zeta_{2n(2N+1)}]\in \mathbb{R}^{2n(2N+1)}_+$ and $ \mathcal{Z}^\star$ is the solution set of the dual problem, i.e., 
\begin{equation}
    \mathcal{Z}^\star = \argmax\limits_{\zeta \geq 0} \min\limits_{\dot{s}^2} \mathcal{L}_I (\theta,\dot{s}^2,\zeta).
    \end{equation} 
    Since the inner problem is convex (see Proposition \ref{prop_convex_problem}) it follows that 
\begin{equation}
    V(\theta) = \mathcal{L}_I(\theta, (\dot{s}^2)^\star, \zeta^\star), 
\end{equation}
where $(\dot{s}^2)^\star$ is the unique solution to \eqref{opt:inner_lin} (see Proposition \ref{prop_convex_problem}) and $\zeta^\star \in \mathcal{Z}^\star$. One could then attempt to decrease the value of $\theta$ by computing the gradient of the above expression with respect to $\theta$ and updating $\theta$ in the negative direction of that gradient. However, it is well known that if the dual solution is not unique the function $V(\theta)$ is not differentiable~\cite[Ch. 6]{bazaraa2006nonlinear}. Yet, we can guarantee that this direction resembles a subgradient. The precise meaning of this sentence is the subject of the following Theorem.

\begin{theorem}\label{thm:inner}

Consider the optimization problem \eqref{opt:inner} and let $((\dot{s}^{2})^\star,\zeta^\star)$ be a primal dual solution. Then, the following result holds
%
\begin{equation}\label{eq:thm1}
  \frac{\partial \mathcal{L}_I(\theta,(\dot{s}^{2})^\star,\zeta^*)}{\partial \theta}^T \theta \geq 0.  
\end{equation}
%
where $\mathcal{L}_I$ is the Lagrangian defined in \eqref{eq:inner_lag}.

\end{theorem}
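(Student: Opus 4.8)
The plan is to use the fact that the cost term of the Lagrangian in \eqref{eq:inner_lag} does not involve $\theta$, so that only the constraint term contributes to the derivative, and then to exploit the homogeneity of the entries of $A(\theta)$ together with complementary slackness. First I would note that the objective $\left(\sum_{i=0}^{N-1}\Delta s_i/\sqrt{\dot{s}_i^2}\right)^2$ and the bound vector $q_\lim$ are both independent of $\theta$. Hence, evaluating the gradient at the primal--dual pair $((\dot{s}^2)^\star,\zeta^\star)$,
\[
\frac{\partial \mathcal{L}_I(\theta,(\dot{s}^2)^\star,\zeta^\star)}{\partial\theta}^T\theta = \frac{\partial}{\partial\theta}\Big[\zeta^\star\big(A(\theta)(\dot{s}^2)^\star\big)\Big]^T\theta,
\]
so it suffices to analyze the quadratic-form-free term $\zeta^\star A(\theta)(\dot{s}^2)^\star$.

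Next I would examine the homogeneity of the rows of $A(\theta)$ in $\theta$. The velocity entries $\mbox{sign}(p_i'\theta_j)(p_i'\theta_j)^2 = (p_i'\theta_j)\,|p_i'\theta_j|$ are positively homogeneous of degree two in $\theta$, while the acceleration entries $p_i''\theta_j - p_i'\theta_j/(2\Delta s_i)$ and $p_i'\theta_j/(2\Delta s_i)$ are linear, hence homogeneous of degree one. Writing $(A(\theta)(\dot{s}^2)^\star)_k$ for the $k$-th constraint value and applying Euler's identity for homogeneous functions gives, for every index $k$,
\[
\frac{\partial (A(\theta)(\dot{s}^2)^\star)_k}{\partial\theta}^T\theta = d_k\,(A(\theta)(\dot{s}^2)^\star)_k, \qquad d_k\in\{1,2\},
\]
with $d_k=2$ on velocity rows and $d_k=1$ on acceleration rows. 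The one place needing care is the $\mbox{sign}$ factor: I would verify directly that $x\mapsto x|x|$ is continuously differentiable (with derivative $2|x|$, including at the origin), so that Euler's identity applies globally and no nonsmoothness arises in this step.

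Finally I would combine with complementary slackness and dual feasibility. Multiplying the per-row identities by $\zeta^\star_k\ge 0$ and summing yields $\frac{\partial \mathcal{L}_I}{\partial\theta}^T\theta = \sum_k \zeta^\star_k d_k (A(\theta)(\dot{s}^2)^\star)_k$. For each $k$, either $\zeta^\star_k=0$ or the constraint is active, so $(A(\theta)(\dot{s}^2)^\star)_k=(q_\lim)_k$; in both cases the summand equals $\zeta^\star_k d_k (q_\lim)_k$. It then remains to read off the signs of $q_\lim$: its velocity entries are the squared bounds $\dot{\overline{q}}_j^2,\dot{\underline{q}}_j^2\ge 0$, and its acceleration entries $\ddot{\overline{q}}_j$ and $-\ddot{\underline{q}}_j$ are nonnegative because the acceleration bounds straddle zero ($\ddot{\underline{q}}_j\le 0\le\ddot{\overline{q}}_j$). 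Since $\zeta^\star_k\ge 0$, $d_k>0$, and $(q_\lim)_k\ge 0$, every summand is nonnegative, which establishes \eqref{eq:thm1}.

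I expect the main obstacle to be precisely this sign-tracking: justifying the Euler step despite the $\mbox{sign}$ factor and confirming that $q_\lim$ is entrywise nonnegative (especially on the acceleration rows), since those are exactly the facts that convert the homogeneity identity into the desired nonnegativity. A secondary subtlety is that the raw constraint values $(A(\theta)(\dot{s}^2)^\star)_k$ need not be nonnegative on their own, so complementary slackness is essential to replace them by the nonnegative limits before concluding.
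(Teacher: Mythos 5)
Your proposal is correct and follows essentially the same route as the paper's proof: both reduce the derivative to the constraint term $\zeta^\star A(\theta)(\dot{s}^2)^\star$, exploit that velocity rows are degree-2 and acceleration rows degree-1 homogeneous in $\theta$ (the paper carries this out term-by-term; you package it as Euler's identity), and then use complementary slackness together with the signs of the limits ($\dot{\overline{q}}_j^2,\dot{\underline{q}}_j^2\geq 0$ and $\ddot{\overline{q}}_j\geq 0\geq \ddot{\underline{q}}_j$) to conclude nonnegativity. Note that, exactly like the paper's own argument, yours implicitly relies on the acceleration limits straddling zero, a hypothesis the paper states only in Theorems \ref{thm:inner_cte} and \ref{thm:inner_noacc}, so this is not a gap relative to the reference proof.
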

    \begin{proof}
        See Appendix \ref{app:PF0}. 
    \end{proof}

The significance of the previous result is that the direction $\frac{\partial \mathcal{L}_I(\theta,(\dot{s}^{2})^\star,\zeta^*)}{\partial \theta}$ can be used to update $\theta$ to minimize $V(\theta)$ akin a subgradient descent method. Indeed, since $\theta =0$ belongs to the minimizers of $V(\theta)$, updating $\theta$ in the negative of this direction (with a sufficiently small step size) reduces the distance to the optimal set. In fact, for the particular cases of constant velocity and no acceleration constraints, this direction is the subgradient. We will prove this claim by showing that $V(\theta)$ is convex. This is the subject of the following two subsections.

\subsection{Constant Path Velocity Case}\label{sec_inner_cte}
In this Section, we consider the case of constant path velocity~\eqref{opt:inner_cte}, which as previously mentioned has a closed form and its value $V(\theta)$ results in a convex function.  This is the subject of the next Theorem.

{\color{black}{

\begin{theorem}\label{thm:inner_cte}
Consider the optimization problem \eqref{opt:inner_cte} parameterized by $\theta$. If the velocity and acceleration joint limits satisfy $\dot{\underline{q}}_{j}, \ddot{\underline{q}}_{j} < 0$, $\dot{\overline{q}}_{j}, \ddot{\overline{q}}_{j} > 0$, then $V(\theta)$, the value of the optimization problem \eqref{opt:inner}, is given by

\begin{dmath}\label{eq_solution_inner_cte}
    V(\theta) \!\!=\!\!\!\!\!\! \max_{\substack{j \in\{1,\dots,n\} \\ i\in\{0,\ldots,N\}}} \!\!\!\left\{\!\!\left(\!\!\max\left\{\frac{p'_i\theta_j}{\dot{\overline{q}}_{j}},\frac{p^{\prime}_i\theta_j}{\dot{\underline{q}}_{j}}\right\}\!\!\right)^2\!\!\!\!\!,  \max\!\!\left\{\frac{p''_i\theta_j}{\ddot{\overline{q}}_{j}},\frac{p^{\prime\prime}_i\theta_j}{\ddot{\underline{q}}_{j}}\right\}\!\!\!\right\}.
\end{dmath}

Furthermore, $V(\theta)$ is convex and $(\dot{s}^2)^\star=1/V(\theta)$.

\end{theorem}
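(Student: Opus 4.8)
The plan is to use that in the constant-velocity setting the \emph{only} decision variable in \eqref{opt:inner_cte} is the scalar $u:=\dot{s}^2\ge 0$, so that minimizing the objective $1/u$ is equivalent to maximizing $u$ over the feasible set. I would first rewrite each constraint of \eqref{opt:inner_cte} as an explicit upper bound on $u$ (equivalently, a lower bound on $1/u$), then take the tightest of these bounds; its value is exactly $V(\theta)$ and its reciprocal is $(\dot{s}^2)^\star$.

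First I would treat the velocity constraints. Writing $a:=p_i'\theta_j$ and using $\mbox{sign}(a)a^2=a|a|$, I would split on the sign of $a$. When $a>0$, the lower inequality $-\dot{\underline{q}}_j^2\le a^2u$ is vacuous (its left side is nonpositive while $a^2u\ge 0$), and the active bound is $u\le \dot{\overline{q}}_j^2/a^2$, i.e. $1/u\ge (p_i'\theta_j/\dot{\overline{q}}_j)^2$; when $a<0$ the upper inequality is vacuous and the active bound gives $1/u\ge (p_i'\theta_j/\dot{\underline{q}}_j)^2$. Under the hypotheses $\dot{\underline{q}}_j<0<\dot{\overline{q}}_j$, exactly one of the two ratios $p_i'\theta_j/\dot{\overline{q}}_j$ and $p_i'\theta_j/\dot{\underline{q}}_j$ is nonnegative, and it is the one giving the binding constraint after squaring; hence both sign cases (and the trivial $a=0$ case) are captured uniformly by $1/u\ge \big(\max\{p_i'\theta_j/\dot{\overline{q}}_j,\; p_i'\theta_j/\dot{\underline{q}}_j\}\big)^2$. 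I would carry out the analogous split for the acceleration constraints with $b:=p''_i\theta_j$; since these are \emph{linear} in $b$ (not quadratic), the sign hypotheses $\ddot{\underline{q}}_j<0<\ddot{\overline{q}}_j$ again make one side vacuous and leave $1/u\ge \max\{p''_i\theta_j/\ddot{\overline{q}}_j,\; p''_i\theta_j/\ddot{\underline{q}}_j\}$, with no square.

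Collecting all $i,j$, feasibility is equivalent to $1/u\ge M(\theta)$, where $M(\theta)$ is the maximum over $i,j$ of the two displayed terms. Since $1/u$ is strictly decreasing in $u>0$, the optimum is attained by taking $u$ as large as possible, i.e. $u^\star=1/M(\theta)$, whence $V(\theta)=M(\theta)$, which is precisely \eqref{eq_solution_inner_cte}, and $(\dot{s}^2)^\star=1/V(\theta)$. (In the degenerate case where $p_i'\theta_j=p''_i\theta_j=0$ for all $i,j$ we get $M=0$, consistent with the earlier observation that then any $\dot{s}^2$ is feasible and $V(\theta)=0$.) For convexity, I would use that a pointwise maximum of convex functions is convex, so it suffices to check each term. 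The acceleration terms are maxima of two functions affine in $\theta$, hence convex. For each velocity term, the inner $g(\theta)=\max\{p_i'\theta_j/\dot{\overline{q}}_j,\; p_i'\theta_j/\dot{\underline{q}}_j\}$ is a max of affine functions, hence convex, and is nonnegative under the sign hypotheses; composing with $y\mapsto y^2$, which is convex and nondecreasing on $[0,\infty)$, preserves convexity, so $g^2$ is convex. Therefore $V(\theta)$ is convex.

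The main obstacle is the bookkeeping in the case analysis: correctly identifying, for each sign of $p_i'\theta_j$ and $p''_i\theta_j$, which inequality is active, and verifying that the compact max-of-ratios form reproduces the active bound in every case (including the zeros). A secondary care point is applying the $y\mapsto y^2$ composition only where the base function $g$ is nonnegative, which is exactly what the sign hypotheses on the joint limits guarantee.
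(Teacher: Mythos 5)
Your proof is correct, and it reaches \eqref{eq_solution_inner_cte} by a genuinely cleaner route than the paper's. The paper's proof (Appendix B) works in the variable $\dot{s}$: it defines an explicit candidate $\dot{s}^\dagger$ as the minimum over all $(i,j)$ of the per-constraint speed caps, verifies feasibility of this candidate case by case (four cases, depending on which type of constraint achieves the minimum, each invoking the definition of the minimum), and then establishes optimality by contradiction, showing that any $\dot{s}^\dagger + \epsilon$ with $\epsilon>0$ violates the constraint that attains the minimum. You instead work in $u = \dot{s}^2$, where every constraint of \eqref{opt:inner_cte} is linear in $u$; after the same sign analysis that the paper performs implicitly, feasibility collapses to the single inequality $1/u \ge M(\theta)$, with $M(\theta)$ the right-hand side of \eqref{eq_solution_inner_cte}, and the value then follows at once from monotonicity of $1/u$ on the interval of feasible $u$ --- both directions $V \le M$ and $V \ge M$ come for free from this characterization, so no candidate verification or contradiction argument is needed, and the degenerate case $M(\theta)=0$ is handled uniformly. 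The computational core is identical in both arguments (identifying, for each sign of $p_i'\theta_j$ and $p_i''\theta_j$, which of the two ratios is nonnegative and binding), but your packaging of it as a one-dimensional interval argument is shorter and less error-prone. One further point in your favor: your convexity argument is more complete than the paper's, which only invokes the pointwise-maximum rule; the velocity terms additionally require the composition step you spell out explicitly, namely that $y \mapsto y^2$ is convex and nondecreasing on $[0,\infty)$ and that the inner maximum of affine functions is nonnegative under the sign hypotheses on the joint limits.
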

\begin{proof}
    We defer the proof of \eqref{eq_solution_inner_cte} to Appendix \ref{app:PF1}. The convexity of the value of the inner subproblem $V(\theta)$ follows from \eqref{eq_solution_inner_cte} and the fact that the pointwise maximum of convex functions is convex~\cite{rockafellar1969convex}. $(\dot{s}^2)^\star = 1/V(\theta)$ follows from the definition of the problem.
\end{proof}

The joint limits hypothesis of Theorem \ref{thm:inner} implies that the joints can move in both directions (clockwise/counterclockwise for a revolute joint, forward/backward for a prismatic joint). Note that this choice also guarantees that $\dot{s}=0$ is feasible. This implies that if the robot does not move, its joints respect their velocity and acceleration limits. These are reasonable assumptions for any manipulator.For most industrial robots, the joint velocity and acceleration limits have the same magnitude regardless of the direction in which they are moving. In this case, \eqref{eq_solution_inner_cte} results in the simpler form presented in Corollary \ref{cor:inner_cte}. 

The closed form solution in \eqref{eq_solution_inner_cte} allows us to efficiently compute the solution of the inner problem \eqref{opt:inner_cte}. Note that the solution to the problem can be computed by evaluating the $2 \times n \times (2N+1)$ functions in \eqref{eq_solution_inner_cte} (corresponding to the constraints of the inner problem). Intuitively, by by finding the tightest constraint, one can find the largest allowed speed. This evaluation is empirically faster than solving the optimization problem using, for instance, CVX (see Section \ref{sec_numerical}). Furthermore, we can compute the subgradient of $V(\theta)$ in closed form. In Corollary \ref{cor:inner_cte}, we provide its expression for the simplified hypothesis of joints with symmetric limits.

\begin{coro}
    \label{cor:inner_cte}
    Under the additional hypothesis that the joint velocity and acceleration limits satisfy
    $    -\dot{\underline{q}} = \dot{\overline{q}}$ and $-\ddot{\underline{q}} = \ddot{\overline{q}},$
    %
    the value $V(\theta)$ defined in \eqref{opt:inner_cte} reduces to 
    \begin{dmath}\label{eq_solution_inner_cor}
    V(\theta) = \max_{\substack{j \in\{1,\dots,n\} \\ i\in\{0,\ldots,N\}}} \left\{f^1_{ij},  f^2_{ij}\right\},
\end{dmath}
    where
    \begin{align}
        f^1_{ij}(\theta) = \left(\frac{p_i^\prime \theta_j}{\dot{\overline{q}}_j} \right)^2,  f^2_{ij}(\theta) = \left|\frac{p_i^{\prime\prime} \theta_j}{\ddot{\overline{q}}_j}\right| .
        \label{eqn_symmetric}
    \end{align}
Furthermore, the subgradient of $V(\theta)$ can be written as 
\begin{equation}
    \partial V(\theta) ={\bf{CO}}\left\{\nabla f_{ij}^k(\theta) \mid f_{ij}^k(\theta) = V(\theta)\right\},
    \label{eq:subgrad_cte}
\end{equation}
    where {\bf{CO}} denotes the convex hull, $i=1,\ldots,N$, $j=1,\ldots,n$ and $k=1,2$.
\end{coro}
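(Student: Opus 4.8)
The plan is to prove the two claims in sequence: first reduce the closed form of Theorem~\ref{thm:inner_cte} to~\eqref{eq_solution_inner_cor} under the symmetric-limit hypothesis, and then compute the subdifferential of the resulting pointwise maximum.

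First I would substitute $\dot{\underline{q}}_j = -\dot{\overline{q}}_j$ and $\ddot{\underline{q}}_j = -\ddot{\overline{q}}_j$ directly into~\eqref{eq_solution_inner_cte}. In the velocity term the inner maximum collapses, since $\max\{p'_i\theta_j/\dot{\overline{q}}_j,\; p'_i\theta_j/\dot{\underline{q}}_j\} = \max\{p'_i\theta_j/\dot{\overline{q}}_j,\; -p'_i\theta_j/\dot{\overline{q}}_j\} = |p'_i\theta_j/\dot{\overline{q}}_j|$ (using $\dot{\overline{q}}_j>0$); squaring reproduces exactly $f^1_{ij}$. The identical manipulation on the acceleration term gives $\max\{p''_i\theta_j/\ddot{\overline{q}}_j,\; p''_i\theta_j/\ddot{\underline{q}}_j\} = |p''_i\theta_j/\ddot{\overline{q}}_j| = f^2_{ij}$. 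This yields~\eqref{eq_solution_inner_cor} with no further work.

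For the subgradient, I would observe that~\eqref{eq_solution_inner_cor} writes $V$ as the pointwise maximum of the finite family $\{f^1_{ij},f^2_{ij}\}$, each member depending only on the block $\theta_j$ and each convex: $f^1_{ij}$ is the square of an affine function, hence smooth and convex, while $f^2_{ij}$ is the absolute value of an affine function, hence convex and differentiable wherever its argument is nonzero. I would then invoke the standard calculus rule for the subdifferential of a finite pointwise maximum of convex functions~\cite{rockafellar1969convex,bazaraa2006nonlinear}, namely $\partial V(\theta) = \mathbf{CO}\bigcup\{\partial f^k_{ij}(\theta) : f^k_{ij}(\theta)=V(\theta)\}$, and reduce each active subdifferential to the singleton $\{\nabla f^k_{ij}(\theta)\}$, recovering~\eqref{eq:subgrad_cte}.

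The main obstacle is this last reduction from subdifferentials to gradients, because $f^2_{ij}$ is nondifferentiable precisely where $p''_i\theta_j=0$. The observation that resolves it is that if such an $f^2_{ij}$ is active, then $f^2_{ij}(\theta)=0=V(\theta)$; since $V\geq 0$ everywhere (the objective is nonnegative), $\theta$ is then a global minimizer and $0\in\partial V(\theta)$ is obtained directly. Consequently, at any point that is not a minimizer every active function is differentiable, each active subdifferential is the singleton $\{\nabla f^k_{ij}(\theta)\}$, and the convex-hull expression~\eqref{eq:subgrad_cte} holds as stated. I would close by recording this degenerate case explicitly, so that the gradient form in~\eqref{eq:subgrad_cte} is justified throughout.
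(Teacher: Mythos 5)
Your proposal is correct and takes essentially the same route the paper intends: substituting the antisymmetric limits into the closed form \eqref{eq_solution_inner_cte} of Theorem \ref{thm:inner_cte} so that each inner maximum collapses to an absolute value, and then invoking the standard convex-hull rule for the subdifferential of a finite pointwise maximum of convex functions. Your explicit treatment of the degenerate case where an active $f^2_{ij}$ vanishes (so that $V(\theta)=0$, $\theta$ is a global minimizer, and $0\in\partial V(\theta)$) is a careful refinement of a point the paper only gestures at in its remark on nondifferentiability, but it does not alter the substance of the argument.
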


When the limits are not antisymmetric, a similar result can be obtained, but four functions $f^k_{ij}$ will be necessary, one for each constraint in minimum and maximum velocity, minimum and maximum acceleration. 
From a practical perspective, this result implies that to take the descent direction, it is simply necessary to identify which constraint is currently maximizing the value of $V(\theta)$ and take the derivative at that joint $j$ and point in the path $i$. If more than one of these constraints is active simultaneously, picking one of them to calculate the derivative should suffice. Additionally, many of these terms would give the same result, as the function is not differentiable only if the active constraints are not linearly independent.

\begin{algorithm}[h]
\caption{INNERPROBLEM subroutine - Constant Speed}\label{alg:in_cte}
\begin{algorithmic}
\Require \\
$\theta \in \mathbb{R}^{nd}, p^\prime, p^{\prime\prime} \in \mathbb{R}^{1 \times d}$ \\
$\dot{\overline{q}},\dot{\underline{q}},\ddot{\overline{q}},\ddot{\underline{q}} \in \mathbb{R}^n$\\ 
\For{$i=0:N,\; j=0:n$}
\State $c_{\texttt{spd}_{ij}} = \max \left\{ \left({p'_{i}\theta_j}\right)/\dot{\overline{q}}_{j},\left({p'_{i}\theta_j}\right)/\dot{\underline{q}}_{j}\right\}$
\State $    c_{\texttt{acc}_{ij}} = \max\left\{\left({p^{\prime\prime}_{i}\theta_j}\right)/{\ddot{\overline{q}}_{j}},\left({p^{\prime\prime}_{i}\theta_j}\right)/{\ddot{\underline{q}}_{j}}\right\}$
\EndFor
\State $V(\theta) = \max \left\{c^2_{\texttt{spd}},c_{\texttt{acc}}\right\}$
\State Compute $\partial V(\theta)$ as in \eqref{eq:subgrad_cte}
\State \Return $V(\theta), \partial V(\theta)$
\end{algorithmic}
\end{algorithm}

The next section discusses analogous results in the case of a manipulator where the acceleration constraints are inactive. Before doing so, we present a pertinent remark regarding the choice of maximizing $t_f^2$ instead of $t_f$.

\begin{remark}
    \label{rmk:convex}
    At this point the reason for minimizing $t_f^2$ rather than $t_f$ can be made explicit. Although the solution to the problem is the same, this choice results in the value $V(\theta)$ being smoother and convex. Without the square, we would need to take square roots on the right-hand side of \eqref{eq_solution_inner_cte}. The introduction of the square root would result in a non-convex function, and a non-Lipschitz subgradient. Both convexity and Lipschitz continuity of gradients are desirable properties for optimization algorithms in general (see e.g., \cite{bertsekas1997nonlinear}).
\end{remark}

\subsection{No Acceleration Constraints Case}\label{sec_inner_noacc}

{{
Take in consideration the simplified problem described in \eqref{opt:inner_noacc}. This particular case can also have its value computed in closed form. This is the subject of the next Theorem.

{\color{black}{

\begin{theorem}\label{thm:inner_noacc}
Consider the optimization problem \eqref{opt:inner} parameterized by $\theta$. 
If the velocity and acceleration joint limits satisfy $\dot{\underline{q}}_{j} < 0$, $\dot{\overline{q}}_{j} > 0$, then $V(\theta)$, the value of the optimization problem \eqref{opt:inner_noacc}, is given by

\begin{dmath}\label{eq_solution_inner_noacc}
    V(\theta) =  \left(\sum_{i=0}^{N-1} \max_{j \in\{1,\dots,n\}}\left\{\frac{\Delta s_ip'_i\theta_j}{\dot{\overline{q}}_{j}},\frac{\Delta s_i p^{\prime}_i\theta_j}{\dot{\underline{q}}_{j}}\right\}\right)^2\!\!\!\!.
\end{dmath}

Furthermore, $V(\theta)$ is convex.

\end{theorem}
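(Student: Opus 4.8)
The plan is to exploit the fact that, once the acceleration constraints are dropped, Problem \eqref{opt:inner_noacc} decouples completely across the path points. Indeed, the only remaining constraints are the per-point velocity constraints, each of which involves a single scalar variable $\dot{s}_i^2$; moreover the objective $\left(\sum_{i=0}^{N-1}\Delta s_i/\sqrt{\dot{s}_i^2}\right)^2$ is a monotone transformation of a sum whose $i$-th summand depends only on $\dot{s}_i^2$. Writing $u_i:=\dot{s}_i^2\ge 0$, I would first reduce the minimization to $N$ independent scalar problems (the point $i=N$ being irrelevant, since $\dot{s}_N$ does not enter the objective), and in each minimize $\Delta s_i/\sqrt{u_i}$, a strictly decreasing function of $u_i$, over the feasible interval for $u_i$.

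The core of the argument is to turn the sign-based velocity constraint into an explicit upper bound on $u_i$. For a fixed joint $j$ I would split on the sign of $p'_i\theta_j$: when $p'_i\theta_j>0$ the factor $\mathrm{sign}(p'_i\theta_j)(p'_i\theta_j)^2$ is positive, the lower inequality $-\dot{\underline{q}}_j^2\le(p'_i\theta_j)^2u_i$ holds automatically (its left side is negative since $\dot{\underline{q}}_j<0$), and the binding constraint is $u_i\le\dot{\overline{q}}_j^2/(p'_i\theta_j)^2$; symmetrically, when $p'_i\theta_j<0$ the upper inequality is vacuous and the active constraint is $u_i\le\dot{\underline{q}}_j^2/(p'_i\theta_j)^2$; when $p'_i\theta_j=0$ the constraint disappears. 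Intersecting over $j$ gives $u_i\le u_i^{\max}:=\min_j\bar{u}_{ij}$, with $\bar{u}_{ij}$ the per-joint bound just described, and since the objective term is decreasing in $u_i$ the optimum is $u_i^\star=u_i^{\max}$.

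Substituting this optimum, I would write $\Delta s_i/\sqrt{u_i^\star}=\Delta s_i\max_j 1/\sqrt{\bar{u}_{ij}}$ and verify the bookkeeping identity $\max\{\Delta s_i p'_i\theta_j/\dot{\overline{q}}_j,\ \Delta s_i p'_i\theta_j/\dot{\underline{q}}_j\}=\Delta s_i/\sqrt{\bar{u}_{ij}}$, again by the sign split, using that the positive of the two ratios is always the one associated with the correct limit. Summing over $i$ and squaring then yields exactly \eqref{eq_solution_inner_noacc}; this mirrors the single-maximum structure of the constant-velocity Theorem \ref{thm:inner_cte}, the difference being that here the per-point speeds are independent, so the maxima are summed rather than taken jointly over $i$. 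One technical point I would flag is the degenerate case in which $p'_i\theta_j=0$ for every $j$: then $u_i$ is unbounded, the infimum of that summand is $0$ and not attained, yet the stated formula still returns $0$, so the value $V(\theta)$ is correct.

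Finally, for convexity I would observe that, for each fixed $i$, the inner quantity is the pointwise maximum of the $2n$ functions $\theta\mapsto\Delta s_i p'_i\theta_j/\dot{\overline{q}}_j$ and $\theta\mapsto\Delta s_i p'_i\theta_j/\dot{\underline{q}}_j$, which are linear in $\theta$ and hence convex, and by the sign analysis above it is nonnegative. The sum over $i$ of nonnegative convex functions is again nonnegative and convex, and composing with the increasing convex map $x\mapsto x^2$ on $[0,\infty)$ preserves convexity, giving convexity of $V(\theta)$. The main obstacle I anticipate is purely the sign case-analysis: tracking which of the two ratios is active as a function of $\mathrm{sign}(p'_i\theta_j)$, and handling the zero-slope points cleanly, is where the care is needed; the decoupling and the convexity steps are then routine.
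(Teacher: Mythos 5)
Your proof is correct and follows essentially the same route as the paper's: exploit the decoupling of the per-point speed variables once the acceleration constraints are dropped, identify the binding velocity constraint at each point via the sign split (exactly as in the constant-velocity case of Theorem \ref{thm:inner_cte}, to which the paper defers), and obtain convexity of $V(\theta)$ as the square of a nonnegative sum of pointwise maxima of linear functions. You supply details the paper leaves implicit---the explicit change of variables $u_i=\dot{s}_i^2$ and the degenerate case $p'_i\theta_j=0$ for all $j$, where the infimum is not attained yet the formula still gives the correct value---but the underlying argument is the same.
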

\begin{proof}
    The proof of \eqref{eq_solution_inner_noacc} is analogous to the proof of \eqref{eq_solution_inner_cte} in Appendix \ref{app:PF1}. In this case, there are only two potential solutions of $\dot{s}$ for each $i$ point in the path $s$ to consider, as there are no acceleration constraints. Compared to the general case, the lack of acceleration constraints implies that each decision variable $\dot{s}_i$ can be decided independently for each velocity constraint.
    
    The convexity of the value of the inner subproblem $V(\theta)$ follows from \eqref{eq_solution_inner_noacc} itself. First, the sum of convex functions is convex, so the sum of max functions is still convex. Next, the composition of two functions $f(g(x))$ is convex if both $f$ and $g$ are convex and f is increasing with respect to $g(x)$. As our sum of max functions is convex, non-negative, and monotonic increasing, then the convex square function is increasing with respect to it, and $V(\theta)$ is convex.
\end{proof}

\begin{coro}
    \label{cor:inner_noacc} 
    The subgradient of $V(\theta)$ in \eqref{eq_solution_inner_noacc} is well defined and can be computed in closed form.

    \begin{proof}
        
    Given that the closed form of  $V(\theta)$ is the sum of max functions, we can define its subdifferential as the sum of the convex hull of the subgradients for all active constraints for all the points in the path \cite{polyak1978}:
\begin{equation}
    \partial V(\theta) ={\bf{CO}}\left\{\nabla f_{ij}^k(\theta) \mid f_{ij}^k(\theta) = V(\theta)\right\},
    \label{eq:subgrad_noacc}
\end{equation}
    where {\bf{CO}} denotes the convex hull, $i=1,\ldots,N$, $j=1,\ldots,n$, $k=1,2$, and
        \begin{align}
        f^1_{ij}(\theta) = \frac{p_i^\prime \theta_j}{\dot{\overline{q}}_j},  f^2_{ij}(\theta) = \frac{p_i^\prime \theta_j}{\dot{\underline{q}}_j} .
        \label{eqn_symmetric}
    \end{align}

    \end{proof}
\end{coro}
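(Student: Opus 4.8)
The plan is to obtain $\partial V(\theta)$ by composing elementary subdifferential calculus rules, exploiting that by Theorem~\ref{thm:inner_noacc} the function $V$ is finite and convex on all of $\mathbb{R}^{nd}$. Since a finite convex function on $\mathbb{R}^{nd}$ has a nonempty, compact, convex subdifferential at every point, this immediately settles the ``well defined'' half of the claim. I would then write $V = \phi \circ G$ with $\phi(u) = u^2$ and $G(\theta) = \sum_{i=0}^{N-1} g_i(\theta)$, where $g_i(\theta) = \max_{j,k} f^k_{ij}(\theta)$ and each $f^k_{ij}$ is the linear map $\Delta s_i\, p_i^\prime \theta_j / \dot{\overline{q}}_{j}$ (for $k=1$) or $\Delta s_i\, p_i^\prime \theta_j / \dot{\underline{q}}_{j}$ (for $k=2$), and build the subgradient from the inside out.

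The first step handles the inner maximum. Each $f^k_{ij}$ is linear, hence differentiable, with the explicit gradient obtained by placing $(\Delta s_i/\dot{\overline{q}}_{j})\,(p_i^\prime)^T$ (respectively with $\dot{\underline{q}}_{j}$) in the block of $\mathbb{R}^{nd}$ corresponding to $\theta_j$ and zeros elsewhere. Because $g_i$ is a pointwise maximum of finitely many differentiable convex functions, the standard max rule~\cite{polyak1978,rockafellar1969convex} yields
\[ \partial g_i(\theta) = \mathbf{CO}\left\{ \nabla f^k_{ij}(\theta) \mid f^k_{ij}(\theta) = g_i(\theta) \right\}, \]
the convex hull of the gradients of the pieces active at $\theta$.

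The second step handles the sum and the outer square. Since each $g_i$ is finite-valued and convex, the Moreau--Rockafellar sum rule applies with no constraint qualification and gives $\partial G(\theta) = \sum_{i=0}^{N-1} \partial g_i(\theta)$ as a Minkowski sum. Finally, $\phi(u)=u^2$ is differentiable and nondecreasing on the range of $G$ (here $G(\theta)\ge 0$, exactly the nonnegativity used in the convexity argument of Theorem~\ref{thm:inner_noacc}), so the chain rule for a smooth outer function gives $\partial V(\theta) = \phi'(G(\theta))\,\partial G(\theta) = 2G(\theta)\,\partial G(\theta)$. Assembling the three steps produces a fully explicit description of $\partial V(\theta)$ in terms of the active index set, the closed-form gradients $\nabla f^k_{ij}$, a convex hull, a Minkowski sum, and a scalar factor, all computable in closed form, which is the assertion.

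The step I expect to require the most care is the bookkeeping of the composition: the clean subdifferential is the scaled Minkowski sum $2G(\theta)\sum_{i} \mathbf{CO}\{\nabla f^k_{ij}(\theta)\}$, whereas \eqref{eq:subgrad_noacc} compresses this into a single convex hull and drops the $2G(\theta)$ factor. I would reconcile the two by noting that for running a subgradient method one needs only \emph{a} subgradient: selecting, for each path index $i$, one active pair $(j,k)$, summing the corresponding gradients, and scaling by $2G(\theta)$ produces a valid element of $\partial V(\theta)$. I would also remark that non-differentiability occurs precisely when some $g_i$ has two or more active pieces with non-collinear gradients, so generically the subgradient collapses to an ordinary gradient.
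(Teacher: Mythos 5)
Your proof is correct, and at its core it takes the same route as the paper: subdifferential calculus applied to the closed form \eqref{eq_solution_inner_noacc} (max rule for each path point, sum rule across path points, citing the same classical machinery). The value of your version is that you carry the composition bookkeeping to the end, and in doing so you repair real defects in the paper's own statement. The paper's displayed formula \eqref{eq:subgrad_noacc} uses the activity condition $f^k_{ij}(\theta)=V(\theta)$ and a \emph{single} convex hull over all $(i,j,k)$; since here $V$ is the \emph{square of a sum} of maxima, a linear piece $f^k_{ij}$ will not in general equal $V(\theta)$ --- the correct activity condition is within each point's own max term, $f^k_{ij}(\theta)=g_i(\theta)$, and the correct object is the Minkowski sum $\sum_{i}\mathbf{CO}\{\nabla f^k_{ij}(\theta)\mid f^k_{ij}(\theta)=g_i(\theta)\}$, which is what the paper's prose (``the sum of the convex hull \ldots for all the points in the path'') describes even though its display does not. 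You also restore two factors the paper drops: the chain-rule factor $\phi'(G(\theta))=2G(\theta)$ coming from the outer square (without it one obtains a positively scaled subgradient of $\sqrt{V}$ rather than of $V$ --- harmless as a descent \emph{direction}, as you note, but not literally an element of $\partial V(\theta)$), and the $\Delta s_i$ weights that appear in \eqref{eq_solution_inner_noacc} but are missing from the paper's definition of $f^k_{ij}$. The paper's single-hull formula appears to be carried over from the constant-velocity case (Corollary \ref{cor:inner_cte}, \eqref{eq:subgrad_cte}), where $V$ is a pure pointwise maximum and that formula is correct; in the no-acceleration case, your composite expression $\partial V(\theta)=2G(\theta)\sum_i \partial g_i(\theta)$ is the one that should be used, and your remark that generic non-differentiability occurs only when some $g_i$ has multiple active pieces with distinct gradients matches the paper's practical discussion.
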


From a practical perspective, this implies that to obtain the subgradient, it is necessary to identify which constraint is active at every point of the path, take the derivative with respect to $\theta$ at each point, and sum them. Note that given the independence of the parameters for each joint, it is possible to follow the derivative with respect to each joint at the same step when multiple constraints are active at the same point in the path.

The results in Theorem \ref{thm:inner_noacc} and Corollary \ref{cor:inner_noacc} provide insights to design a subroutine (see Algorithm \ref{alg:in_noacc}) that efficiently solves this particular case of the inner subproblem. 
The subroutine consists of evaluating all the terms involved in \eqref{eq_solution_inner_noacc} selecting the maximum for each point in the trajectory, then summing for all points.

\begin{algorithm}
\caption{INNERPROBLEM subroutine - No Acc Constraint}\label{alg:in_noacc}
\begin{algorithmic}
\Require \\
$\theta \in \mathbb{R}^{nd}, p^\prime, p^{\prime\prime} \in \mathbb{R}^{1 \times d}$ \\
$\dot{\overline{q}},\dot{\underline{q}},\ddot{\overline{q}},\ddot{\underline{q}} \in \mathbb{R}^n$\\ 
\For{$i=0:N,\; j=0:n$}
\State $c_{\texttt{spd}_{ij}} = \max \left\{ \left({p'_{i}\theta_j}\right)/\dot{\overline{q}}_{j},\left({p'_{i}\theta_j}\right)/\dot{\underline{q}}_{j}\right\}$
\EndFor
\State $V(\theta) = \left(\sum \left\{c_{\texttt{spd}}\right\}\right)^2$
\State Compute $\nabla V(\theta)$ as in \eqref{eq:subgrad_noacc}
\State \Return $V(\theta), \nabla V(\theta)$
\end{algorithmic}
\end{algorithm}

\section{Outer Subproblem Optimization}\label{sec_outer}

In this section we focus on solving Problem \eqref{opt:outer}. We assume that the inner problem is solved and that $V(\theta)$ and a descent direction $d$  can be evaluated as described in Section \ref{sec_inner_problem}. Note that the descent direction could take the form of the subgradient of $V(\theta)$} as in Section \ref{sec_inner_cte} and \ref{sec_inner_noacc}.

Problem \eqref{opt:outer} is a nonconvex optimization problem that could be solved with e.g., trust-region~\cite{bobrow1985}, active set~\cite{gill2019practical}, or projected gradient~\cite{combettes2011}  methods. In particular, we consider a primal-dual type of method (\cite[Ch. 11]{boyd2004convex}), to find a local minima, due to the simplicity of its implementation. Let $\lambda, \mu,\nu $ be in the positive orthants in $\mathbb{R},\;\mathbb{R}^{n(N+1)},\;\mbox{and}\;\mathbb{R}^{n(N+1)}$, respectively.  Define the Lagrangian for the outer subproblem \eqref{opt:outer} as
\begin{multline}   
    \label{eq:Lagrangian}
    \mathcal{L}_O(\theta,\lambda,\mu,\nu) = V(\theta) + \lambda \left(E(\theta) - \epsilon\right) \\ + \sum_{i=0}^N\sum_{j=1}^n \mu_{ij}(p_i\theta_j - \overline{q}_j)  -  \sum_{i=0}^N\sum_{j=1}^n \nu_{ij}(p_i\theta_j - \underline{q}_j).
\end{multline}
{The Cartesian path constraint $(E(\theta) - \epsilon)$ naturally depends on the highly nonlinear robot forward kinematics as seen in equation \eqref{eq:error}. If we differentiate $\chi_i = k(I_n\otimes p_i \theta)$ with respect to $\theta$, we obtain} 
\begin{equation}\label{eqn_error}
\nabla_\theta k(I_n\otimes p_i (\theta)) =   J(q_i)(I_n\otimes p_i).
\end{equation}
%
where $J(q_i)$ is the derivative of $k(q_i)$ with respect to $q_i$, commonly known as the manipulator Jacobian. 
Considering that the manipulator is redundant, the Jacobian $J(q_i)$ must have a nullspace~\cite[Ch. 10]{siciliano2007}, thus if $(I_n\otimes p_i)\Delta\theta$ is in the nullspace of $J(q_i)$ it is feasible to change the pose of the manipulator without changing the path followed by the end-effector. With this intuition, the last term in equation \eqref{eq:Lagrangian} is a correction term that exploits this redundancy to reduce the Cartesian error.  
Then define the following direction
\begin{multline}
    \label{eq:der_lag_theta}
    d_O = -d + \sum_{i=0}^N\sum_{j=1}^n \mu_{ij}p_i^T  -  \sum_{i=0}^N\sum_{j=1}^n \nu_{ij}p_i^T \\ +\lambda \left(\sum_{i=0}^N \frac{\partial E}{\partial \chi_i}J(q_i)(I_n\otimes p_i)\right),
\end{multline}
where $d$ is the descent direction defined in \eqref{eq:thm1} or the subgradient of $V(\theta)$. The remaining terms are the derivatives of the gradient of the Lagrangian with respect to $\theta$ (cf., \eqref{eq:Lagrangian}. This direction is the gradient of the Lagrangian when it is differentiable. 
Then, the primal update is equivalent to taking steps in the direction given by a weighted combination of a descent direction for $V(\theta)$ and the gradients of the constraints in \eqref{opt:outer}. The update rule can be written as
\begin{equation}
    \label{eq:update}
    \theta_{k+1} = \theta_{k} - \alpha d_O(\theta_k,\lambda_k,\mu_k,\nu_k),
\end{equation}
where $\alpha>0$ is the step-size. The linear combination weights are adaptively defined based on the constraint violation. Note that the constraint violations are the gradients of the Lagrangian with respect to the multipliers
\begin{equation}    
    \label{eq:der_lag_l1}
    \nabla_{\lambda} \mathcal{L}(\theta,\lambda,\mu,\nu) = \left(E(\theta) - \epsilon\right), 
\end{equation}   
\begin{equation}   
    \label{eq:der_lag_lij}
    \nabla_{\mu_{ij}} \mathcal{L}(\theta,\lambda,\mu,\nu) = (p_i\theta_j - \overline{q}_j), 
\end{equation}
\begin{equation}   
    \label{eq:der_lag_lji}
    \nabla_{\nu_{ij}} \mathcal{L}(\theta,\lambda,\mu,\nu) = (p_i\theta_j - \underline{q}_j).
\end{equation}
Then the updates of the Lagrangian multipliers $\lambda,\nu,\mu$ are given by
\begin{equation}
    \label{eq:update_lambda}
    \lambda_{k+1} = \max\{0,\lambda_{k} - \beta \nabla_\lambda \mathcal{L}(\theta_{k+1},\lambda_k,\mu_k,\nu_k)\},
\end{equation}
\begin{equation}
    \label{eq:update_mu}
    \mu_{k+1} = \max\{0,\mu_{k} - \beta \nabla_\mu \mathcal{L}(\theta_{k+1},\lambda_k,\mu_k,\nu_k)\},
\end{equation}
\begin{equation}
    \label{eq:update_nu}
    \nu_{k+1} = \max\{0,\nu_{k} - \beta \nabla_\nu \mathcal{L}\theta_{k+1},\lambda_k,\mu_k,\nu_k)\},
\end{equation}
where $\beta>0$ is the dual step-size. 
The algorithm discussed here is summarized in Algorithm \ref{alg:comp}. 
\begin{algorithm}
\caption{Complete Two-part Optimization Algorithm}\label{alg:comp}
\begin{algorithmic}
\Require \\
$\chi^d \in \mathbb{R}^{q\times N+1}, q \in \mathbb{R}^{n\times N+1}, s \in \mathbb{R}^{N+1}$ \\
$\dot{\overline{q}},\dot{\underline{q}},\ddot{\overline{q}},\ddot{\underline{q}} \in \mathbb{R}^n$, $K,\alpha_0,\epsilon,\sigma, \gamma,c_1 \in \mathbb{R}^+$

\State \Call{InnerProblem} {$p_i',p_i'',\theta,\dot{\underline{q}},\dot{\overline{q}},\ddot{\underline{q}},\ddot{\overline{q}}$}
\For{k = 1:K}
\Comment{Primal-Dual Descent}
\State Compute 
$d_o$ as in \eqref{eq:der_lag_theta}
\State Update $\theta_k$ as in \eqref{eq:update}
\State \Call{InnerProblem} {$p_i',p_i'',\theta,\dot{\underline{q}},\dot{\overline{q}},\ddot{\underline{q}},\ddot{\overline{q}}$}
\State Update $\lambda_k,\mu_k,\nu_k$ as in \eqref{eq:update_lambda}--
,\eqref{eq:update_nu}
\EndFor
\end{algorithmic}
\end{algorithm}

In the next section, we present numerical and experimental results to illustrate the performance of the algorithm.

\section{Numerical Simulations}\label{sec_numerical}

In this section, simulation results are presented to illustrate the proposed approach for optimizing joint trajectory for redundant manipulators under constant and non-constant path speed. The application is inspired by a cold spraying or material deposit application. In particular, we consider the planar problem depicted in Figure \ref{fig:robot}. The curve $\chi^d$ to be traced is the leading edge of a generic 2D fan blade mock-up for applications such as material deposit, or cold spraying. The trajectory has been discretized as $N +1 = 500$ points with uniform distance along the path, {and a specified Cartesian error tolerance of $\pm 1cm$ max, at any given point. In practice, it is not possible for the robot to immediately switch from a resting position to constant velocity or vice-versa, we address this by extending the desired curve by 10\% at the beginning and the end, and then disregarding the speed or cartesian errors produced on these extensions.}

In this planar case, the extra degree of freedom can be directly expressed by the heading variable $\phi$. The manipulator is such that the length of its links are $a_1 = 2m$, $a_2 = 1.5m$, $a_3 = 1m$. We impose the following joint speed and acceleration limits $\dot{\overline{q}} = -\dot{\underline{q}} =[1.75\;1.57\;1]\;\;rad/s$ and $\ddot{\overline{q}} = -\ddot{\underline{q}} = [35\;31.4\;20]\;\;rad/s^2$. These values are based on industrial robot manipulators, such as a FANUC M-1000iA series robot, to handle the reach and size of the fanblade.\footnote{The typical industrial robot of this size would usually have at least 6 degrees of freedom, we have reduced it to a 3R manipulator for the purposes of this numerical demonstration.}.

\begin{figure}
    \centering
    \includegraphics[width=4cm]{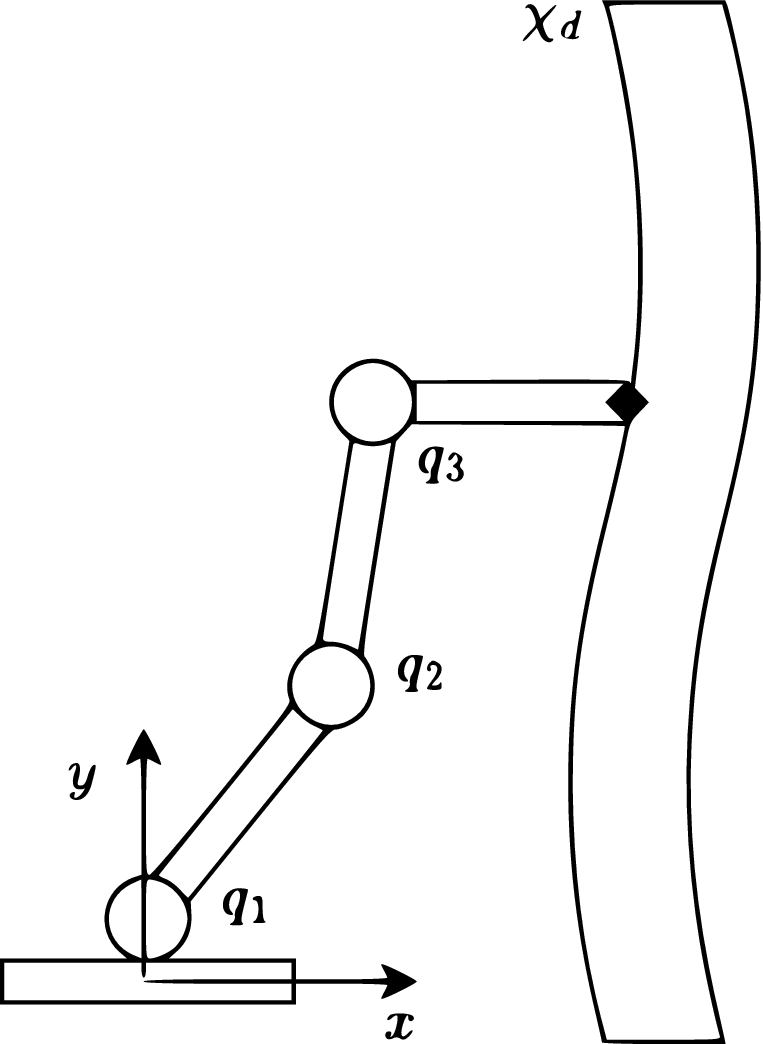}
    \caption{Reference task: A 3R Planar Robot manipulator to trace the 2D leading edge of a fanblade for a material deposit application. The free variable for this particular simulated scenario is the rotation along the curve edge.}
    \label{fig:robot}
\end{figure}

    The forward kinematics mapping $k(q)$ for a 3R planar robot manipulator is given as follows
    \begin{equation}
        \left[\begin{array}{c}
             x  \\
             y  \\
             \phi
        \end{array}\right] = \left[\begin{array}{c}
             a_1c_1 + a_2c_{12}+a_3c_{123}  \\
             a_1s_{1} + a_2s_{12}+a_3s_{123}  \\
             q_1 + q_2 + q_3
        \end{array}\right],
    \end{equation}
    where $c_{ijk} = \cos(q_i + q_j + q_k)$ and $s_{ijk} = \sin(q_i + q_j + q_k)$.

    Note that as $\phi$ is our free variable, the important constraints for the optimization problem are the first two rows of $k(q)$ and $J(q)$.

    Furthermore, for the next two subsections, the initial condition for the free variable $\phi$ has been randomly picked between $[0, 2\pi]$ with uniform distribution and kept constant for the whole trajectory, for a total of ten experiments. The result of this draw is $\phi$ $\in$ $\left[5.12 \;    0.225 \;   5.34 \;    5.87 \;    4.26 \;    4.76 \;    4.67 \;    2.46\;    2.12 \;    1.08\right]$. For each $\phi$, the corresponding initial $q$ has been calculated by inverse kinematics $k^{-1}$ and parameterized, in terms of $s$, using a polynomial basis function with degree equal to five.

Simulations are carried out in a Precision 3650 Tower Dell Workstation, with
 11th Gen Intel(R) Core(TM) i9-11900K, 128 GB RAM, running Windows 11 Pro and Matlab R2021b. 

In Section \ref{sec_numerical_ours} we present the results of our proposed approach for different initial conditions of the free variable $\phi$, with the same hyperparameters. Afterward (Section \ref{sec_comparison}), we offer a comparison of the performance of our proposed approach to Matlab's innate nonlinear optimization solver, for the same initial parameters, using the Interior Point methods for nonlinear optimization (see e.g.,~\cite{byrd2000trust,waltz2006interior}) and Active-set algorithms (see e.g.,~\cite{gill2019practical}).

    \subsection{Results for Proposed Approach}\label{sec_numerical_ours}
\begin{figure}
        	\centering
            \psfrag{x}{\hspace{-3mm}\scriptsize $x (m)$}
        	\psfrag{y}{\hspace{-3mm}\scriptsize $y (m)$}
        	\psfrag{chid}{\scriptsize $\chi^d$}
        	\psfrag{chi1}{\scriptsize $\chi_1$}
        	\psfrag{chi2}{\scriptsize $\chi_2$}
        	\psfrag{chi3}{\scriptsize $\chi_3$}
        	\psfrag{chi4}{\scriptsize $\chi_4$}
        	\psfrag{chi5}{\scriptsize $\chi_5$}
        	\psfrag{chi6}{\scriptsize $\chi_6$}
        	\psfrag{chi7}{\scriptsize $\chi_7$}
        	\psfrag{chi8}{\scriptsize $\chi_8$}
        	\psfrag{chi9}{\scriptsize $\chi_9$}
        	\psfrag{chi10}{\scriptsize $\chi_{10}$}
        	\includegraphics[width=8cm]{./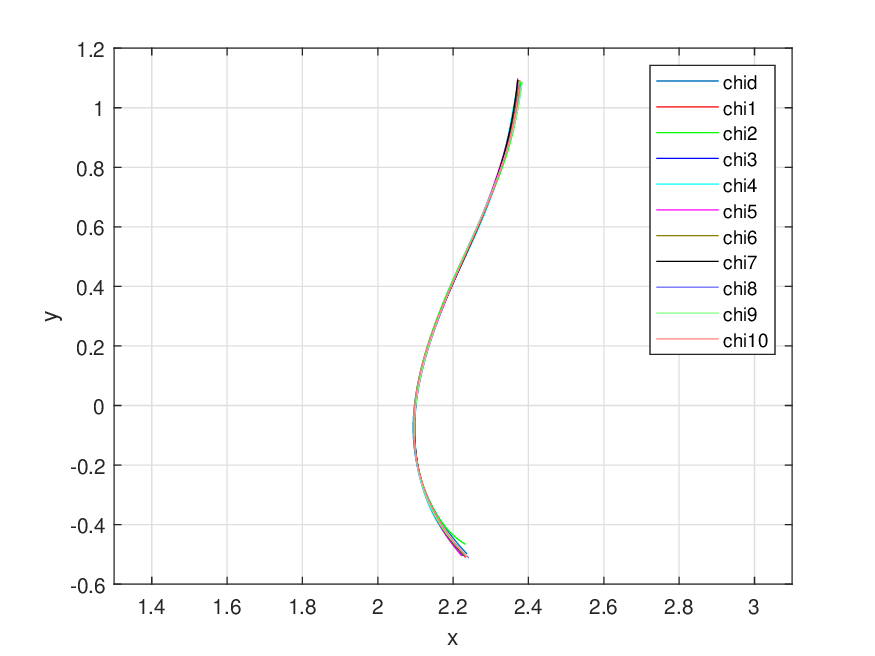} 
            \hfill 
            \psfrag{s}{\hspace{-3mm}\scriptsize $s$}
        	\psfrag{e}{\hspace{-3mm}\scriptsize $e (m)$}
        	\psfrag{e1}{\scriptsize $e_1$}
        	\psfrag{e2}{\scriptsize $e_2$}
        	\psfrag{e3}{\scriptsize $e_3$}
        	\psfrag{e4}{\scriptsize $e_4$}
        	\psfrag{e5}{\scriptsize $e_5$}
        	\psfrag{e6}{\scriptsize $e_6$}
        	\psfrag{e7}{\scriptsize $e_7$}
        	\psfrag{e8}{\scriptsize $e_8$}
        	\psfrag{e9}{\scriptsize $e_9$}
        	\psfrag{e10}{\scriptsize $e_{10}$}
            \includegraphics[width=8cm]{./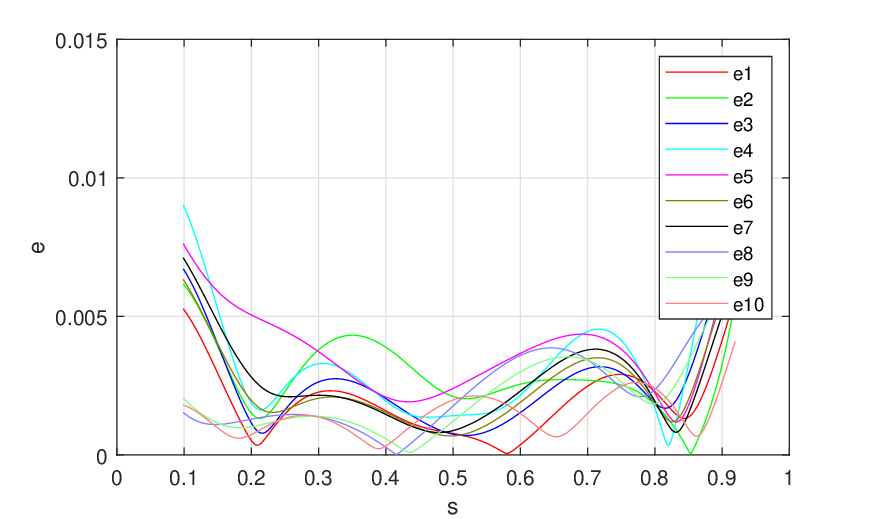}
	      \caption{Final Cartesian path $\chi$ and error with the reference path for the trajectory obtained with Algorithm \ref{alg:comp} for ten different initial conditions. For all experiments, the error has stayed under the designed threshold. }
	      \label{fig:sim1}
            \end{figure}
            \begin{figure}
        	\centering
            \psfrag{it}{\hspace{-3mm}\scriptsize Iterations}
        	\psfrag{tf}{\hspace{-3mm}\scriptsize $t_f (s)$}
        	\psfrag{tf1}{\hspace{-1mm}\scriptsize $t_{f_{1}}$}
        	\psfrag{tf2}{\hspace{-1mm}\scriptsize $t_{f_{2}}$}
        	\psfrag{tf3}{\hspace{-1mm}\scriptsize $t_{f_{3}}$}
        	\psfrag{tf4}{\hspace{-1mm}\scriptsize $t_{f_{4}}$}
        	\psfrag{tf5}{\hspace{-1mm}\scriptsize $t_{f_{5}}$}
        	\psfrag{tf6}{\hspace{-1mm}\scriptsize $t_{f_{6}}$}
        	\psfrag{tf7}{\hspace{-1mm}\scriptsize $t_{f_{7}}$}
        	\psfrag{tf8}{\hspace{-1mm}\scriptsize $t_{f_{8}}$}
        	\psfrag{tf9}{\hspace{-1mm}\scriptsize $t_{f_{9}}$}
        	\psfrag{tf10}{\hspace{-1mm}\scriptsize $t_{f_{10}}$}

        	\includegraphics[width=8cm]{./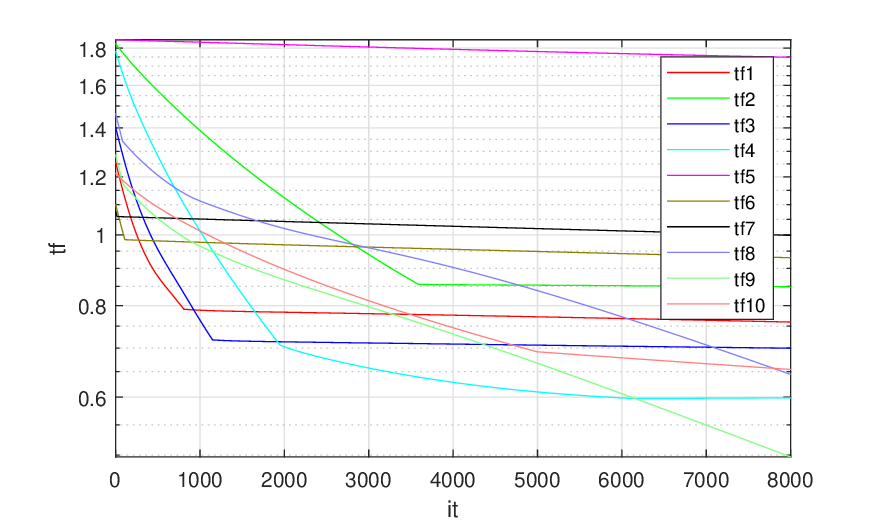} 
            \hfill 
        	\psfrag{it}{\hspace{-3mm}\scriptsize Iterations}
        	\psfrag{cv}{\hspace{-12mm}\scriptsize Constraint Tolerance Usage}
        	\psfrag{cv1}{\hspace{-1mm}\scriptsize $ct_1$}
        	\psfrag{cv2}{\hspace{-1mm}\scriptsize $ct_2$}
        	\psfrag{cv3}{\hspace{-1mm}\scriptsize $ct_3$}
        	\psfrag{cv4}{\hspace{-1mm}\scriptsize $ct_4$}
        	\psfrag{cv5}{\hspace{-1mm}\scriptsize $ct_5$}
        	\psfrag{cv6}{\hspace{-1mm}\scriptsize $ct_6$}
        	\psfrag{cv7}{\hspace{-1mm}\scriptsize $ct_7$}
        	\psfrag{cv8}{\hspace{-1mm}\scriptsize $ct_8$}
        	\psfrag{cv9}{\hspace{-1mm}\scriptsize $ct_9$}
        	\psfrag{cv10}{\hspace{-1mm}\scriptsize $ct_{10}$}
            \includegraphics[width=8cm]{./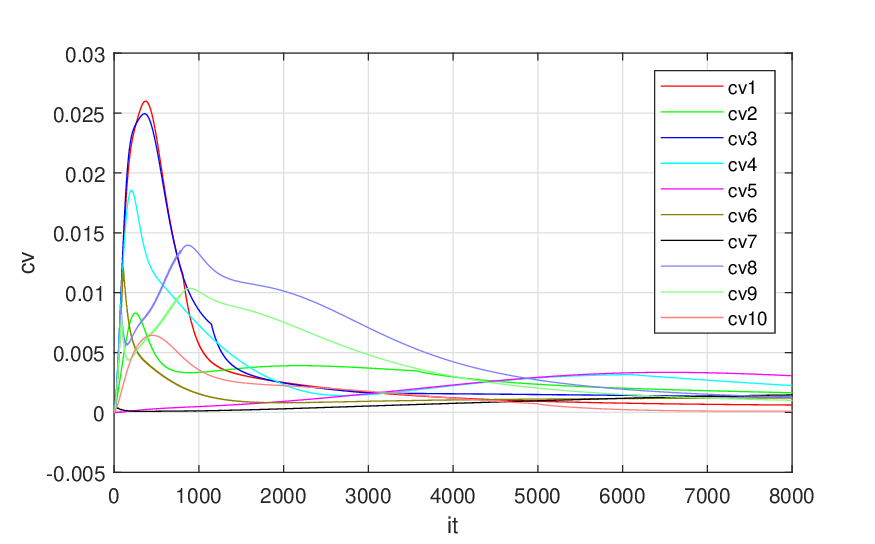}
	      \caption{Evolution of the objective function and constraint tolerance usage of Algorithm \ref{alg:comp} with the number of iterations for ten different initial conditions. Though improvement exists for all experiments, one can see that performance is doubtlessly dependant on initial conditions.}
	      \label{fig:sim2}
            \end{figure}
        In this subsection, we show results for the performance of our proposed approach for the ten different initial values of $\phi$, while hyperparameters have been unchanged between runs. We set the hyperparameters as 8000 iterations for the Complete Two-part Optimization Algorithm;               Path constraint tolerance $\epsilon = 0.00001$ ;  Step size $\alpha = 0.00001$, step size $\beta = 0.5$.

            Optimization results are presented from Figures \ref{fig:sim1} to \ref{fig:sim4}. The Cartesian trajectory achieved by the proposed optimization for all different initial conditions can be seen in Figure \ref{fig:sim1} (top), where we can notice that the algorithm has pushed most of the admissible error towards the end of the trajectory. In Figure \ref{fig:sim2} (bottom), we can see that most runs have violated the given tolerance $\epsilon$ for the norm 2 path constraint, however, Figure \ref{fig:sim1} (bottom) shows that none of the runs have violated task specifications of $1 \mbox{cm}$ max Cartesian error. Figure \ref{fig:sim1} (bottom) also reinforces the notion that most of the error has been pushed towards the end of the curve.

            We can see how the travel time $t_f$ improves (c.f., Figure \ref{fig:sim2} (top)), over every iteration of the algorithm for all $\phi$, and, on average, we have an improvement of nearly $38\%$ when comparing the travel time of the initial condition and that of the final trajectory obtained. More detailed results and averages can be seen in Table \ref{tab:proposed}, where we have information on the running time of the algorithm, on top of results and averages for variables $\dot{s}$, $t_f$, max Cartesian error, and the $t_f$ improvement percentage.

            In Fig. \ref{fig:sim4}, we see how the proposed optimization algorithm has changed the initial choice for the free variable $\phi$, for each initial condition, and how relatively small changes have significantly improved the performance of the manipulator.

            {Furthermore, Figure \ref{fig:sim5} illustrate how different choices of $\phi$ can result in entirely different joint curves with different path speeds, for a same Cartesian path. In particular, we show the joint curve for $\phi_5$ and $\phi_9$, the slowest and fastest joint paths obtained by the optimization algorithm, respectively. It is noticeable that the faster trajectory ($\phi_9$) has much smaller joint variation from start to end of path than the slower trajectory. The intuition for this is that since a joint is limited by its joint speed and acceleration, {smaller derivatives with respect to the path $s$ allow for higher path speed limits.} This idea is corroborated by the result of Theorem \ref{thm:inner} in Section \ref{sec_inner_problem}, smaller values of $p''_i\theta_j$ and $p'_i\theta_j$ allow for higher values of $\dot{s}$ before activating their respective constraint. 

            \begin{table*}[!ht]
            \centering
\begin{tabular}{|c|c|c|c|c|c|c|c|c|c|c|c|}
\hline
Proposed Approach         & $\phi_1$ & $\phi_2$ & $\phi_3$ & $\phi_4$ & $\phi_5$ & $\phi_6$ & $\phi_7$ & $\phi_8$ & $\phi_9$ & $\phi_{10}$ & \textbf{Mean}                \\ \hline \hline
$\dot{s}$                 & 1.3160 &	1.1777 &	1.4288 &	1.6700 &	0.5747 &	1.0753 &	1.0020 &	1.5516 &	2.0117 &	1.5277      & \textbf{1.3335 $\pm$	0.4006} \\ \hline
$t_f$ (s)                    & 0.7599&	0.8491	&0.6999&	0.5988	&1.7400	&0.9300	&0.9980	&0.6445&	0.4971 &	0.6546      & \textbf{0.8372$\pm$	0.3524} \\ \hline
Running Time (s)             & 207 &	210&	209&	209&	208&	205&	205&	205&	205&	205      & \textbf{207 $\pm$	2} \\ \hline
Max Path Error $(mm)$ & 5.7 &	6.2	& 8.0 &	10.0 &	8.0 &	7.0 &	7.0 &	6.0 &	6.5 &	4.0            & \textbf{6.8	$\pm$1.6} \\ \hline
$t_f$ Improvement               & 0.2126   & 0.5271   & 0.5217   & 0.5548   & 0.0978   & 0.3776   & 0.2881   & 0.3883   & 0.4786   & 0.4077      & \textbf{0.3854 $\pm$ 0.1486} \\ \hline
\end{tabular}
\caption{Individual and average results of the simulation for ten different initial conditions. As expected of a local optimization approach, even as running time is around the same for all experiments, performance improvement and average error have higher standard deviations. Regardless, even in the worst run the algorithm has shown a 9.7\% improvement in final time, with improvement going as high as as 55.5\%. }\label{tab:proposed}
\end{table*}
            \begin{figure}
        	\centering
        	\psfrag{s}{\hspace{-3mm}\scriptsize $s$}
        	\psfrag{phi}{\hspace{-3mm}\scriptsize $\phi\;(rad)$}
        	\psfrag{phi1}{\hspace{0mm}\scriptsize $\phi_1$}
        	\psfrag{phi2}{\hspace{0mm}\scriptsize $\phi_2$}
        	\psfrag{phi3}{\hspace{0mm}\scriptsize $\phi_3$}
        	\psfrag{phi4}{\hspace{0mm}\scriptsize $\phi_4$}
        	\psfrag{phi5}{\hspace{0mm}\scriptsize $\phi_5$}
        	\psfrag{phi6}{\hspace{0mm}\scriptsize $\phi_6$}
        	\psfrag{phi7}{\hspace{0mm}\scriptsize $\phi_7$}
        	\psfrag{phi8}{\hspace{0mm}\scriptsize $\phi_8$}
        	\psfrag{phi9}{\hspace{0mm}\scriptsize $\phi_9$}
        	\psfrag{phi10}{\hspace{0mm}\scriptsize $\phi_{10}$}
        	\includegraphics[width=8cm]{./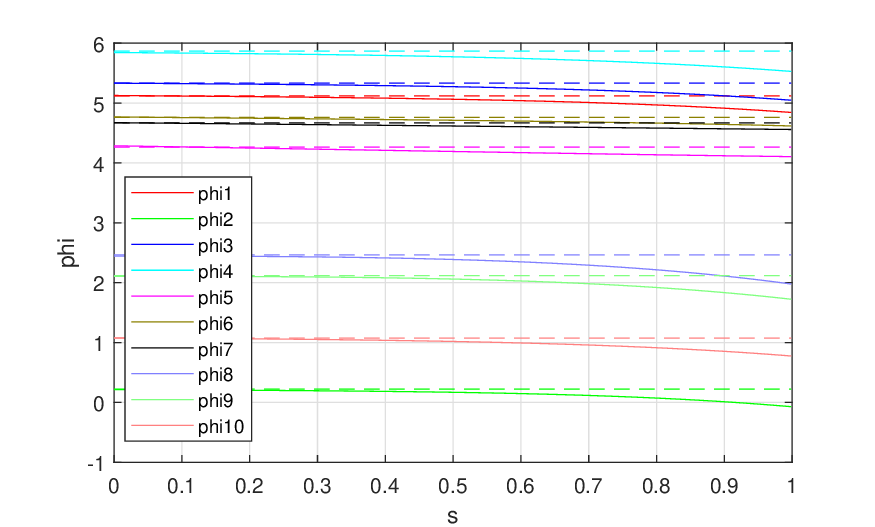} 
	      \caption{Initial and Final configuration of the free variable $\phi$ over the path length for all initial conditions. Even relatively small changes in the free variable can drastically improve performance.}
	      \label{fig:sim4}
            \end{figure}  
            
        \begin{figure}
    \centering
    \includegraphics[width=8cm]{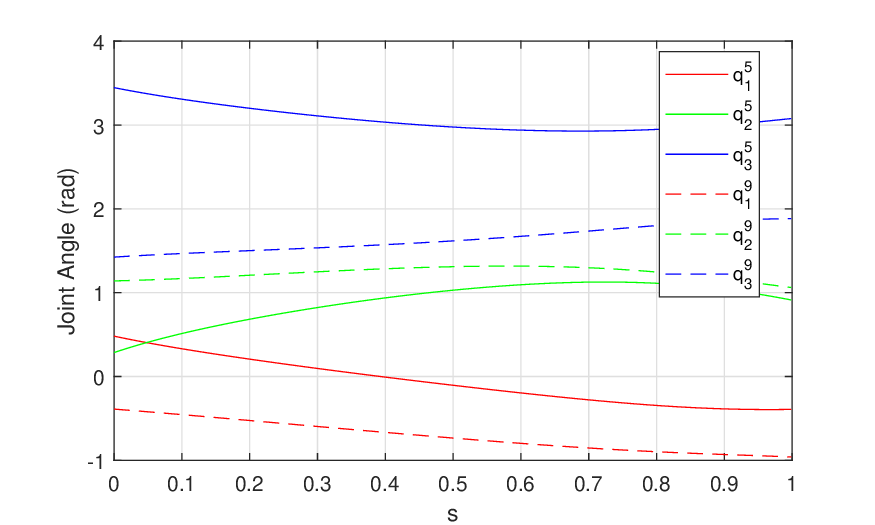}
    \caption{Comparison between the optimal joint trajectories obtained for $\phi_5$ (solid lines) and $\phi_9$ (dashed lines). Though they trace the same Cartesian curve, experiment 9 ended with a final time less than half of experiment 5. Note that the joint paths in experiment 9 have gentler slopes and less curvature.}
    \label{fig:sim5}
\end{figure}

    \subsection{Performance Comparison with other nonlinear optimization algorithms}\label{sec_comparison}

    In this subsection, we compare the performance of our proposed approach to two of Matlab innate nonlinear optimization solvers, using interior point methods (see e.g.,~\cite{byrd2000trust,waltz2006interior}) and Active-set algorithms (see e.g.,~\cite{gill2019practical}). We ran the two baseline algorithms for the same initial conditions as the proposed algorithm (Section \ref{sec_numerical_ours}). For the Matlab solvers, we have instructed both Algorithms to solve problem \eqref{opt:generaldiscrete}. Initial velocity guesses for these methods were given by the solution of our inner problem.

            \begin{figure}
        	\centering
        	\psfrag{art}{\hspace{-12mm}\scriptsize Algorithm Running Time $(s)$}
        	\psfrag{tf}{\hspace{-3mm}\scriptsize $t_f\;(s)$}
        	\includegraphics[width=8cm]{./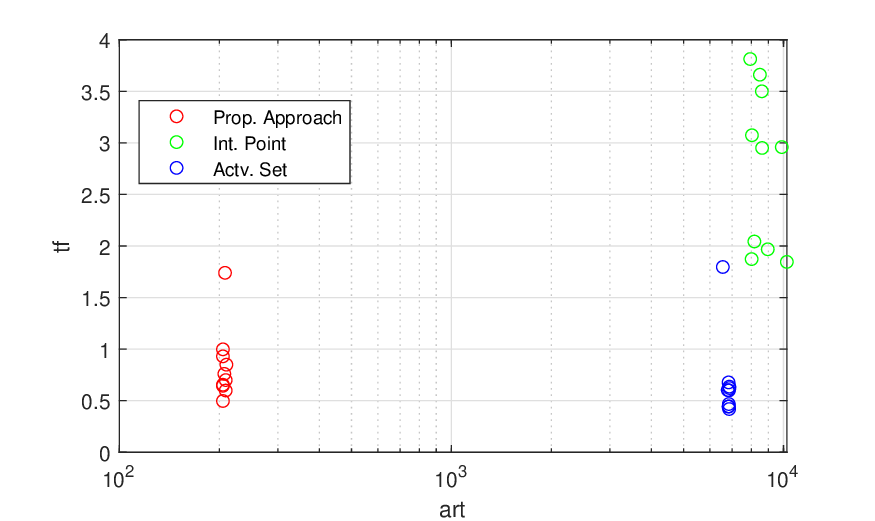}
	      \caption{Performance versus algorithm running time for the three compared methods. The active-set method has the best performance but is far more computationally expensive. Our proposed approach finds a solution with comparable performance over an order of magnitude faster. The interior point method fails to find a good solution with the resources given.}
	      \label{fig:comp}
            \end{figure}

            \begin{table*}[!ht]
            \centering
\begin{tabular}{|c|c|c|c|c|c|}
\hline
                  & $\dot{s}$           & $t_f(s)$               & Running Time (s)       & Max Path Error $(mm)$& $t_f$ Improvement         \\ \hline \hline
Proposed Approach & 1.3335 $\pm$ 0.4006 & 0.8372 $\pm$ 0.3524 & 207 $\pm$ 2 & 6.8 $\pm$ 1.6       & 0.4020 $\pm$ 0.2248 \\ \hline
Interior Point    & 0.3896 $\pm$	01144       & 2.7689	$\pm$ 0.7749       & 8694 $\pm$	792  & 3	$\pm$ 1            & -0.9874	$\pm$ 0.5732       \\ \hline
Active-set        & 1.7566$\pm$	0.5403       & 0.6710 $\pm$	0.4066      & 6815$\pm$	91   & 5.7	$\pm$1.9           & 0.5141 $\pm$	0.2856     \\ \hline
\end{tabular}
\caption{Mean and standard deviation of relevant metrics for 
the optimization approaches, in simulation. Though the active-set approach has obtained higher speeds than our method on average, our proposed approach reaches a result faster by a factor of thirty, given the same initial conditions.}\label{tab:comp}
\end{table*}

The hyperparameters for interior point and active set methods were defined as follows: A maximum of 100 iterations for the Matlab solvers; Path violation tolerance $\epsilon = 0.00001$. Other parameters were kept at their default values.

Average results over the proposed initial conditions are shown in Table \ref{tab:comp} and Figure \ref{fig:comp}. 
Our algorithm has mostly converged to a local solution with an average of 207 seconds, while the Interior Point algorithm average run takes 8690 seconds, about 2.4 hours, and the Active-set algorithm has an average of 6815 seconds, about 1.9 hours. 
\begin{figure}
        	\centering
            \psfrag{Nominal}{\hspace{-5mm} \scriptsize Nom. Optimized}
        	\includegraphics[width=8cm]{./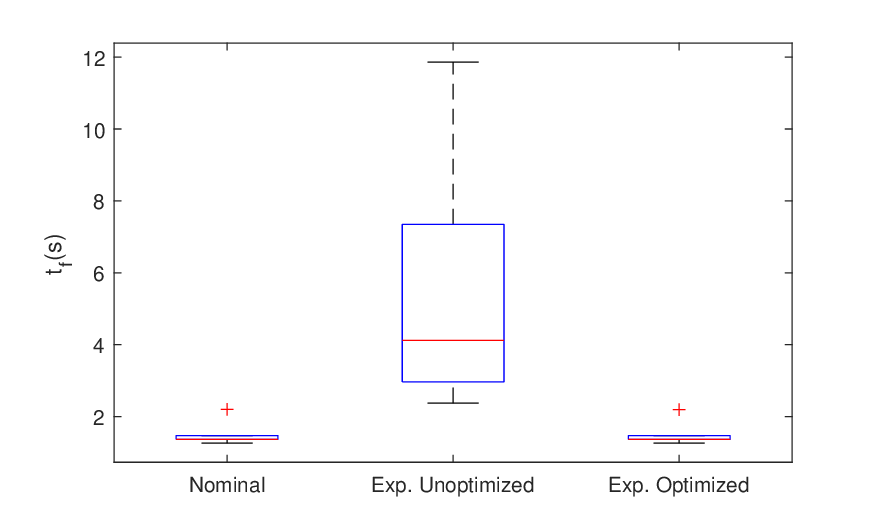} 
            \hfill 
        	\includegraphics[width=8cm]{./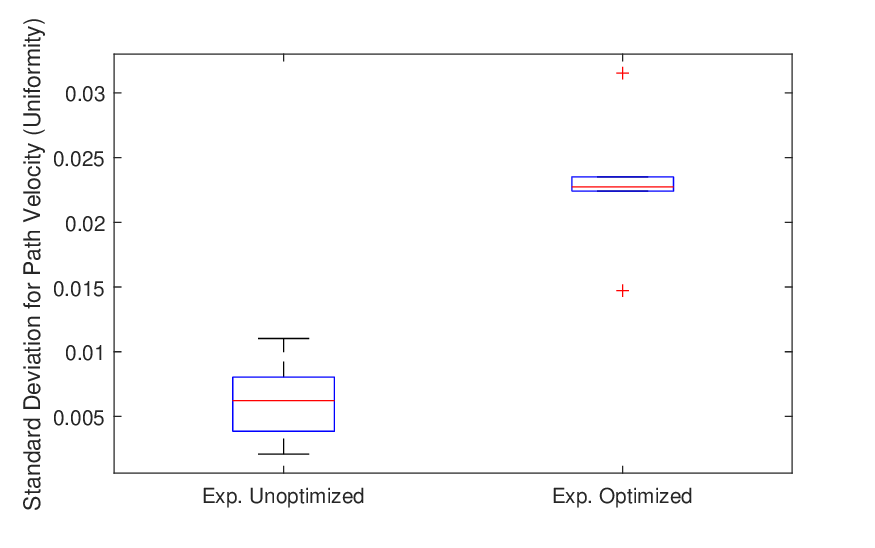} 
	      \caption{Trajectory Time and Motion Uniformity for Optimized Nominal, Unoptimized and Optimized Experimental Results. Experimental results were as fast as expected in terms of trajectory time and improvement compared to unoptimized results is noticeable. The faster path has a higher speed standard deviation when compared to the slower path, which is due to acceleration and deceleration at the edges of the curve.}
	      \label{fig:exp1}
            \end{figure}
            \begin{figure}
        	\centering
            \psfrag{Nominal}{\hspace{-5mm} \scriptsize Nom. Optimized}
        	\includegraphics[width=8cm]{./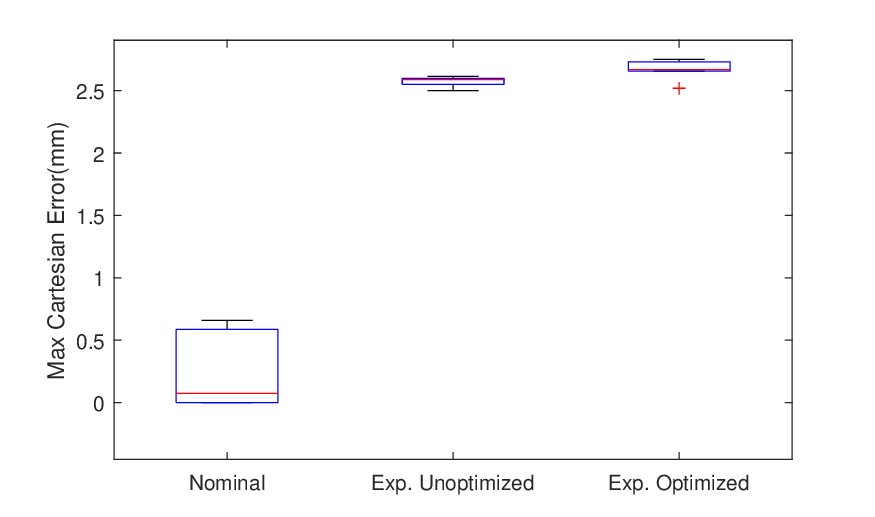} 
            \hfill 
            \includegraphics[width=8cm]{./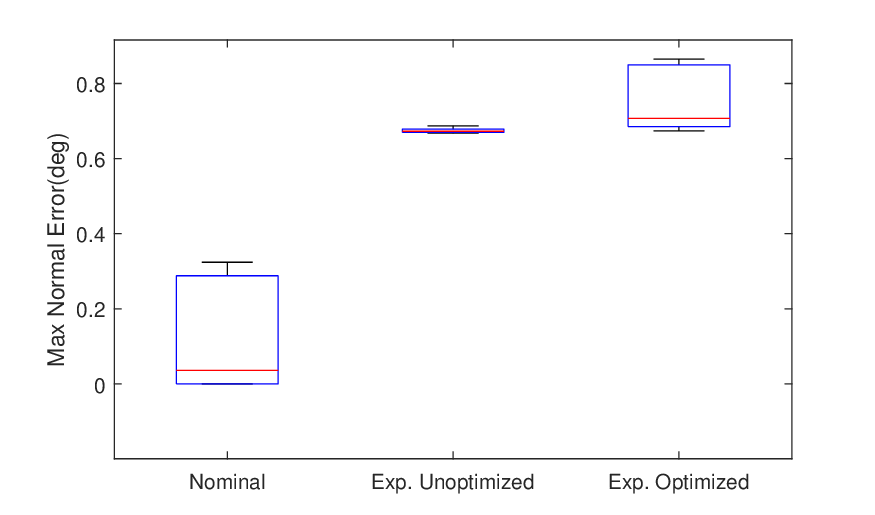}
	      \caption{Max Cartesian and Normal Errors for Optimized Nominal, Unoptimized and Optimized Experimental Results. Both errors were higher than what had been expected from the nominal case, yet acceptable when taking in consideration the experimental error for the unoptimized curve. Even then, errors are still under the desired threshold.}
	      \label{fig:exp2}
            \end{figure}
            \begin{figure}
        	\centering
        	\includegraphics[width=8cm]{./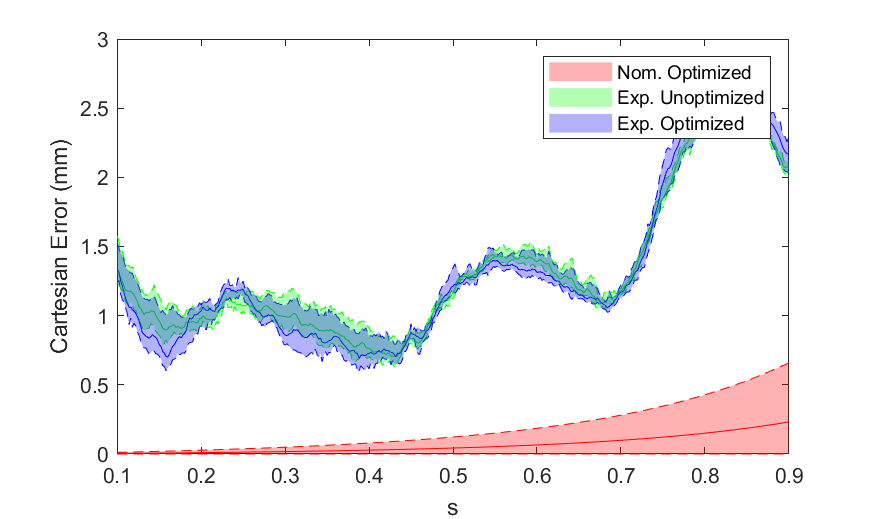} 
            \hfill 
            \includegraphics[width=8cm]{./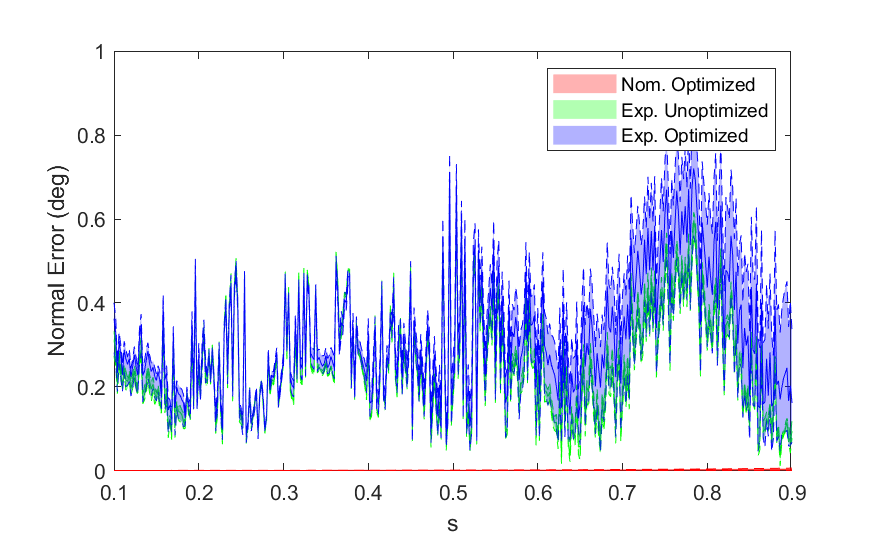}
	      \caption{Cartesian and Normal Errors for Optimized Nominal, Unoptimized and Optimized Experimental Results. Shows comparisons not only for the maximum error but also for the error along the trajectory. Note that it does have the same tendency of pushing errors towards the edge of the curve.}
	      \label{fig:exp3}
            \end{figure}
Another point of importance is the reliability of those algorithms. We can see that, on average, our proposed approach has better improvement than the interior point method, and comparable performance with the active-set methods, about 40\% against -98\% and 50\%, respectively. We can see in Figure \ref{fig:comp} that the active-set algorithm had solutions that managed to beat our proposed approach, with similar performances in terms of Cartesian path error, given about thirty times the computational time to reach its solution. 

The interior point algorithm was reliable in terms of keeping the error within bounds, never exceeding the constraint feasibility, but the price paid was having each run exceed the limit number of iterations, with over two hours of run time, without producing any meaning result. Meanwhile, the other approaches have produced results with faster speeds while keeping the Cartesian error under the $1cm$ specification.

In conclusion, our proposed approach has a number of advantages over the Interior Point and Active-set methods when it comes to the proposed problem: it is orders of magnitude faster to run, has a good degree of reliability, and produces comparable, if not better, results given the resources expended. The two other algorithms are powerful in the sense that they can tackle a variety of general nonlinear problems but are unable to exploit the particularities and structure of the path traversal time optimization problem that we solve.

In the next section, we present the results obtained with a 6R robot, for a similar edge-tracing task.

\section{Experimental Results}
\label{sec_exp}

In this section, experimental results are presented to illustrate an actual application for the proposed joint trajectory optimization approach. The application is inspired by a cold spraying or material deposit application as in the last section. For this experiment, we consider a generic 3D fan blade, and a 6R robot. In this case, the free variable is rotation around the normal of the Cartesian trajectory to be traced. We discretize the trajectory as N+1 = 500 points with uniform distance along the path. We have a specified Cartesian error tolerance of $\pm5mm$ and an angular error around the normal of $1$ degree max, at any given point. Once again, since in practice the robot cannot switch from a resting position to constant velocity, we extend the desired curve by $10\%$ at the beginning and the end, disregarding speed, position, or orientation errors produced on these extensions.

In this 3D case, the extra degree of freedom can be expressed by the rotation around the trajectory normal. The 6R manipulator used for the following experimental results is the UR10e, from the manufacturer Universal Robotics. Denavit-Hartenberg (DH) parameters for this manipulator can be found directly on the company website at \cite{ur10datasheet}. The expression for the forward kinematics $k(q)$ and the differential kinematics $J(q)$ can be directly determined from the DH parameters, as seen in \cite{ur10DHparameters}.

Furthermore, for the next two subsections, the free variable has been picked in two different ways: for one experiment, the free variable is described by a constantly increasing curve, from $0$ to $\pi$; for the next set of five experiments, ten coefficients for a polynomial curve have been picked with a $\mathcal{N}(0,2)$ distribution. The corresponding initial $q$ curves have been calculated by inverse kinematics $k^{-1}$.

The optimization process was carried out in a Precision 3650 Tower Dell Workstation, with
 11th Gen Intel(R) Core(TM) i9-11900K, 128 GB RAM, running Windows 11 Pro and Matlab R2021b. The optimized curves were then executed in the UR10e manipulator, through a Lenovo Station Model T-15g running Ubuntu 18.04, ROS Melodic, and the UR Robot Driver package. Data of each run has been saved in a rosbag for analysis.

 In Section \ref{sec_real_c}, we present the results of our proposed optimization approach for constant velocity, for different initial conditions of the free variable. In Section \ref{sec_real_nc}, we present the results of the optimization approach for non-constant velocity, for the same set of initial conditions of the free variable.

\subsection{Real Robot Constant Velocity}\label{sec_real_c}

            \begin{figure}
        	\centering
\includegraphics[width=8cm]{./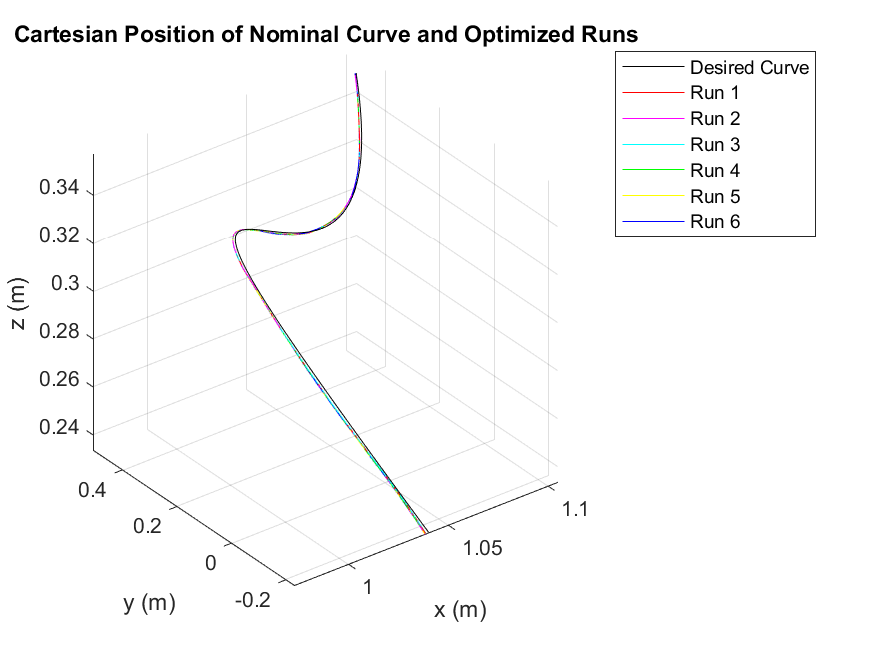} 
	      \caption{Cartesian Path for all Experiments compared to Nominal Curve. This figure visually shows how close all optimized experiments are to the desired curve, despite the increased error.}
	      \label{fig:exp5}
            \end{figure}
        In this subsection, we show results for the performance of our proposed approach for constant velocity case, presented in Algorithm \ref{alg:in_cte} and \ref{alg:comp}. For the six different initial values of the free variable, the hyperparameters for all runs are set as follows Step-size $10^{-4}$. Fixed $\lambda = 10$ for the Cartesian Error, $\lambda = 0$ for the orientation. Each experiment was terminated when the value reached no improvement under given error tolerance. Experimental results for the constant velocity case are presented from Figures \ref{fig:exp1} to \ref{fig:exp5}. 

            Figure \ref{fig:exp1}(a) shows that the final curve time from the experimental results matches what had been expected from simulations. The same figure also illustrates that despite the large discrepancies in time for the various initial curves, all curves converge to very similar final times. Both simulated and experimental results have an average improvement of 63.5\%.
            
            Figure \ref{fig:exp1}(b) showcases the motion uniformity the constant velocity has been reached with a good degree of success, with low standard deviation for the average path speed. This deviation comes from an acceleration and deceleration phenomenon at the edges of the curve and the internal interpolation of the driver to reach the required speed.

            We have executed both the initial and optimized curves for each experiment, and we can see that the time improvement between them matches what had been expected from simulation. The major difference between simulation and experimental results has been the cartesian and normal errors. We can see, however, that they consistent with the error already obtained from running the unoptimized curve. Figure \ref{fig:exp2}(a) and Figure \ref{fig:exp2}(b) illustrate this effect. The maximum error however, is still under our proposed limit of 5 millimeters in Cartesian and 1 degree around the normal.

            Figure \ref{fig:exp3} and Figure \ref{fig:exp5} represent the errors along the path for nominal optimized, experimental unoptimized, and experimental optimized curves. The curve spread in Figures \ref{fig:exp3}(a) and (b) represent the minimum, average, and maximum errors for each point in the path. Figure \ref{fig:exp5} show that the Cartesian curves obtained in each experiment are very close to the nominal curve.
            
            With this, we conclude that the curve obtained by the optimization algorithm is reasonable and can be implemented in the actual robot with satisfactory results. In the next subsection, we analyze the experimental results for the case with variable velocity.

\subsection{Real Robot non-constant Velocity}\label{sec_real_nc}

\begin{figure}
        	\centering
        	\psfrag{Nominal}{\hspace{-5mm} \scriptsize Nom. Optimized}
            
    \includegraphics[width=8cm]{./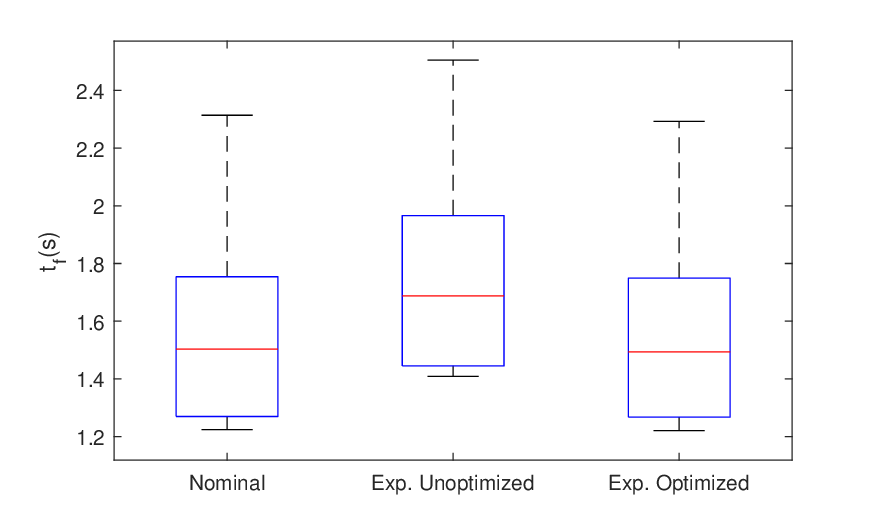} 
            \hfill 
        	\includegraphics[width=8cm]{./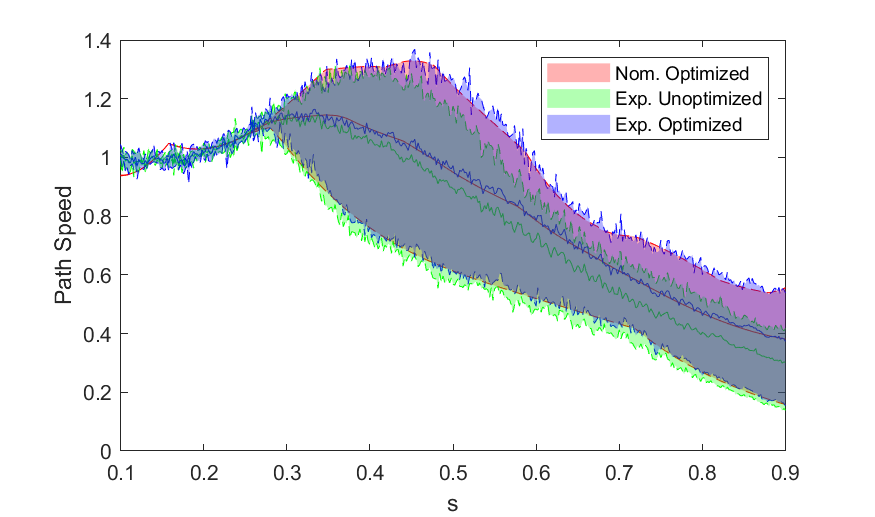} 
	      \caption{Trajectory Time and Path Speed for Optimized Nominal, Unoptimized and Optimized Experimental Results. Experimental results were as fast as expected in terms of trajectory time and improvement is more subdued compared to the constant velocity case, but still noticeable at about 11.5\%. Note that by the standard deviation, there is not much improvement to be done at the beginning of the path, although better results could be achieved close to the end.}
	      \label{fig:exp6}
            \end{figure}
            \begin{figure}
        	\centering
        	\includegraphics[width=8cm]{./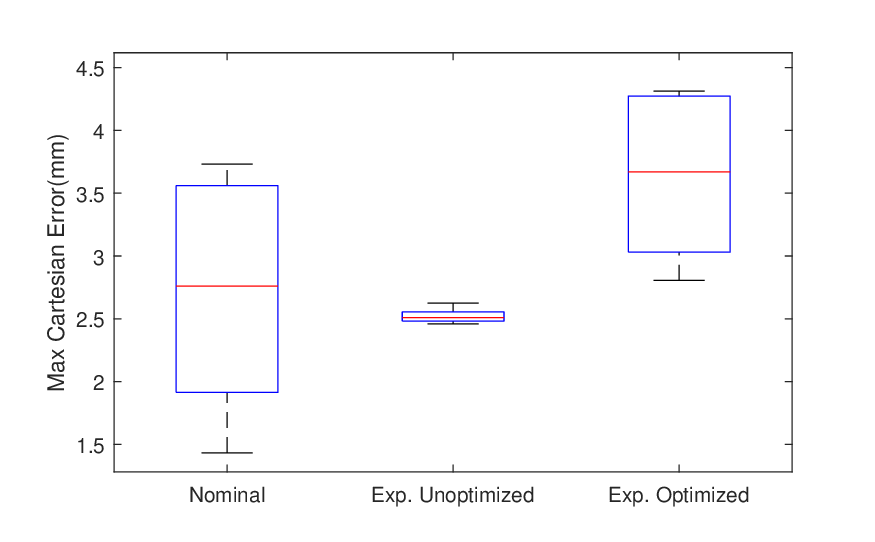} 
            \hfill 
            \includegraphics[width=8cm]{./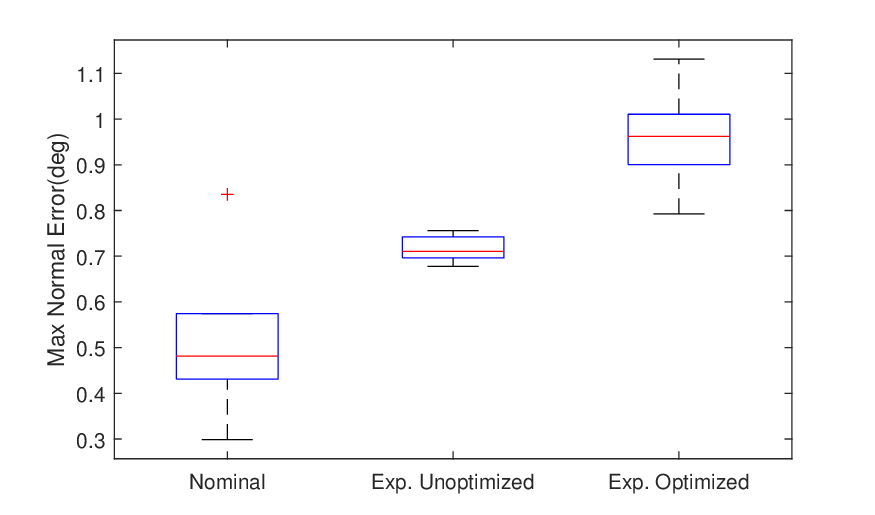}
	      \caption{Max Cartesian and Normal Errors for Optimized Nominal, Unoptimized and Optimized Experimental Results. Cartesian errors have been kept under 5mm even with the average increase on the experimental runs, but angular error has gone slightly above the limit of 1 degree in one of the experimental runs. One possible solution would be tightening up the constraint for the normal error and rerunning the optimization setup.}
	      \label{fig:exp7}
            \end{figure}
            \begin{figure}
        	\centering
            \psfrag{Nominal}{\hspace{-5mm} \scriptsize Nom. Optimized}         
        	\includegraphics[width=8cm]{./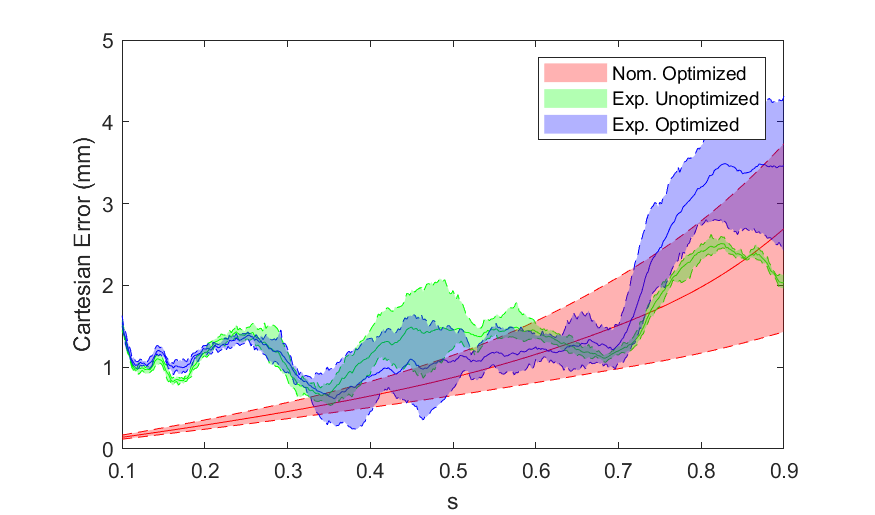} 
            \hfill 
            \includegraphics[width=8cm]{./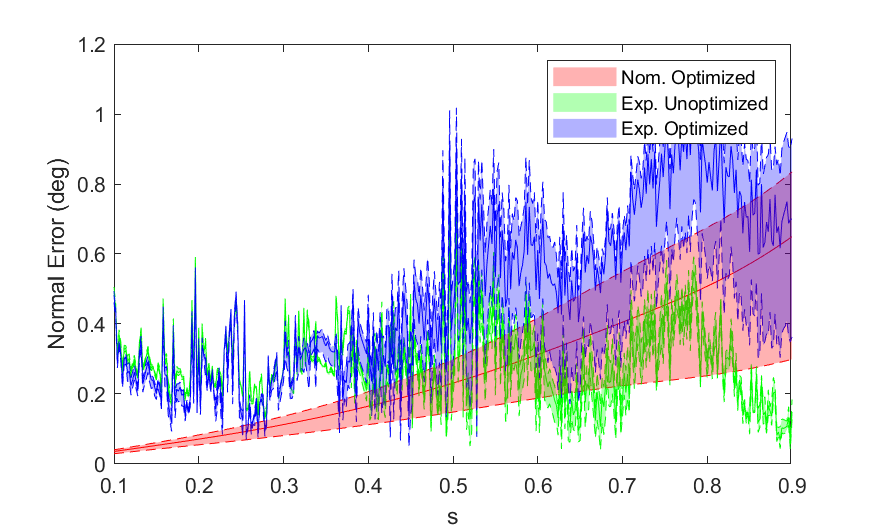}
	      \caption{Cartesian and Normal Errors for Optimized Nominal, Unoptimized and Optimized Experimental Results. The trend of having errors towards the end of the path is still visible and nominal and experimental errors are comparable in this case, albeit the latter is noisier.}
	      \label{fig:exp9}
            \end{figure}
%
%

            %

%
             In this subsection, we show results for the performance of our proposed approach for the variable velocity case, for the six different initial values of the free variable. Each experiment was terminated upon value reaching no improvement under error tolerance. The hyperparameters for each run are given as follows:
            \begin{itemize}
                \item Run 1 and 2: Step-size $3 \times 10^{-5}$. Fixed $\lambda = 50$ for the Cartesian Error, $\lambda = 0$ for the orientation.
                \item Run 3 and 4: Step-size $1.5\times10^{-4}$. Fixed $\lambda = 50$ for the Cartesian Error, $\lambda = 0$ for the orientation.
                \item Run 5: Step-size $1.5\times10^{-3}$. Fixed $\lambda = 50$ for the Cartesian Error, $\lambda = 0.01$ for the orientation.
                \item Run 6: Step-size $3\times10^{-6}$. Fixed $\lambda = 50$ for the Cartesian Error, $\lambda = 0$ for the orientation.
            \end{itemize}

            Experimental results for the constant velocity case are presented from Figures \ref{fig:exp6} to \ref{fig:exp10}. 

            Figure \ref{fig:exp6}(a) shows that the final curve time from the experimental results matches what had been expected from simulations. The same figure also illustrates unoptimized motion profiles now have much closer final times, when compared to the constant velocity case, for the same initial conditions.  Figure \ref{fig:exp1}(b) showcases the speed profile along the path. Both Simulated and Experimental final time improvement average 11.5\% for the variable velocity scenario. 
            
            Though this improvement looks smaller than the previous case at first glance, it is important to highlight that for the constant velocity, a single critical point makes the whole curve go at that minimum speed, yet with the variable velocity, one could still have a configuration that is fast everywhere else. We can observe this by looking once again at Figure \ref{fig:exp6}(b).

            We have executed both the initial and optimized curves for each experiment, and we can see that the time improvement between them matches what had been expected from simulation. The major difference between simulation and experimental results has been the cartesian and normal errors. We can see, however, that they consistent with the error already obtained from running the unoptimized curve. Figure \ref{fig:exp7}(a) and Figure \ref{fig:exp7}(b) illustrate this effect. However, while the maximum cartesian error is still under our proposed limit of 5 millimeters in Cartesian, the normal error has gone over the limit at 1.12 degree around the normal - while simulated max error had been 0.85 degree. 

            Figure \ref{fig:exp9} and Figure \ref{fig:exp10} represent the errors along the path for nominal optimized, experimental unoptimized, and experimental optimized curves. The curve spread in Figures \ref{fig:exp9}(a) and (b) represent the minimum, average, and maximum errors for each point in the path. Figure \ref{fig:exp10} show that the Cartesian curves obtained in each experiment are very close to the nominal curve.
            
            With this, we conclude that the curve obtained by the variable velocity optimization algorithm is reasonable and can be implemented in the actual robot with satisfactory results.

\section{Conclusions}\label{sec_conclusion}
In this work, we considered the problem of optimizing the joint motion of a redundant manipulator that must track a Cartesian path. Despite the non-convexity of the resulting problem, we exploited the intrinsic structure of the problem to achieve local solutions efficiently. In particular, we reformulate it as a bi-level problem where the low-level problem is convex. We prove that this formulation is equivalent to the original problem. Futhermore, we show that under certain hypothesis, this low-level subproblem can be solved in closed form. The efficient computation of the value is exploited by the high-level subproblem in finding a descent direction, {then we use a Primal-Dual optimization method to find a local minima for the problem}. We evaluated the proposed method numerically in a problem inspired by a cold spraying application, then evaluated it experimentally with a UR10e manipulator for a similar problem in the same class.

\section*{Acknowledgment}

The authors would like to thank Honglu He, Chen-lung Lu, Yunshi Wen, Agung Julius, and John T. Wen for the meaningful discussions related to the work in this paper.




\appendices


\section{Proof of Theorem \ref{thm:inner}}\label{app:PF0}
            \begin{figure}
        	\centering
        	\includegraphics[width=8cm]{./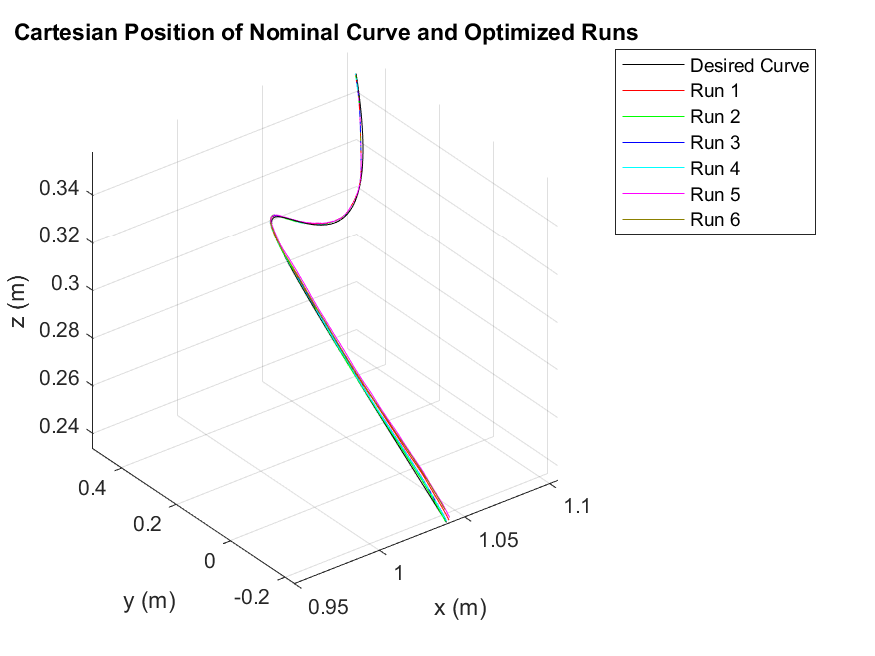} 
	      \caption{Cartesian Path for all Experiments compared to Nominal Curve. Compared experimental results at constant velocity, error is more pronounced at the end of the path.}
	      \label{fig:exp10}
            \end{figure}
Recall that 
\begin{equation}\label{eqn_aux_gradient}
 \frac{\partial \mathcal{L}_I(\theta,(\dot{s}^{2})^\star,\zeta^*)}{\partial \theta} = \frac{\partial (\zeta^*)^\top A(\theta)(\dot{s}^2)^\star}{\partial \theta}. 
 \end{equation}
 For a given $\zeta^\star \in \mathcal{Z}$, we denote by ${\zeta}_{vij}^+$ the multiplier corresponding to the positive velocity constraint for joint $j$ at point $i$. Analogously, we denote, ${\zeta}_{vij}^-$ for those multipliers corresponding to negative velocity constraints and ${\zeta}_{aij}^+$ for positive acceleration, and ${\zeta}_{aij}^-$ for negative acceleration constraints, respectively. With these considerations, rewrite \eqref{eqn_aux_gradient} as 
\begin{align}
     &(\zeta^*)^\top A(\theta)(\dot{s}^2)^\star = \sum_{i=0}^N\sum_{j=1}^n{\zeta}_{vij}^+\mbox{sign}(p_i'\theta_j)(p_i'\theta_j)^2(\dot{s}^2)^\star  \\ &- \sum_{i=0}^N\sum_{j=1}^n{\zeta}_{vij}^-\mbox{sign}(p_i'\theta_j)(p_i'\theta_j)^2(\dot{s}^2)^\star \nonumber \\ &+ \sum_{i=0}^{N-1}\sum_{j=1}^n{\zeta}_{aij}^+\left((p^{\prime\prime}_i\theta_j - \frac{p^{\prime}_i\theta_j}{2\Delta s_i})(\dot{s}^2)^\star + \frac{p^{\prime}_i\theta_j}{2\Delta s_i}(\dot{s}_{i+1}^2)^\star \right) \nonumber\\ &- \sum_{i=0}^{N-1}\sum_{j=1}^n{\zeta}_{aij}^-\left((p^{\prime\prime}_i\theta_j - \frac{p^{\prime}_i\theta_j}{2\Delta s_i})(\dot{s}^2)^\star + \frac{p^{\prime}_i\theta_j}{2\Delta s_i}(\dot{s}_{i+1}^2)^\star\right), \nonumber
\end{align}
Note that because of complementary slackness it follows that ${\zeta}_{vij}^+ > 0$, $\mbox{sign}(p_i'\theta_j) = 1$ and ${\zeta}_{vij}^+ > 0$, $\mbox{sign}(p_i'\theta_j) = -1$. This allows us to write the previous expression in a compact form
\begin{align}
    \label{eq:inner_lag_exp_app}
     &(\zeta^*)^\top A(\theta)(\dot{s}^2)^\star = \sum_{i=0}^N\sum_{j=1}^n\left({\zeta}_{vij}^+ + {\zeta}_{vij}^+\right)(p_i'\theta_j)^2(\dot{s}^2)^\star  \\  &\!\!\!+ \sum_{i=0}^{N-1}\sum_{j=1}^n\left({\zeta}_{aij}^+-{\zeta}_{aij}^+\right)\!\!\left(\!\!(p^{\prime\prime}_i\theta_j - \frac{p^{\prime}_i\theta_j}{2\Delta s_i})(\dot{s}^2)^\star + \frac{p^{\prime}_i\theta_j}{2\Delta s_i}(\dot{s}_{i+1}^2)^* \!\!\right)\!. \nonumber
\end{align}

From \eqref{eqn_aux_gradient} and computing the partial derivative of \eqref{eq:inner_lag_exp_app} with respect to $\theta_j$, the parameters of joint $j$, it follows that 
\begin{align}
     &\frac{\partial \mathcal{L}_I (\theta,(\dot{s}^2)^*,\zeta^*)}{\partial \theta_j}   =  2 \sum_{i=0}^N\left({\zeta}_{vij}^++{\zeta}_{vij}^-\right)(\dot{s}^2)^\star(p_i'\theta_j)p_i^{\prime T} \nonumber \\ &\!\!\!\quad+ \sum_{i=0}^{N-1}\left({\zeta}_{aij}^+ - {\zeta}_{aij}^-\right)\!\!\left(\!\!(p_i^{\prime\prime T} - \frac{1}{2\Delta s_i}p_i^{\prime T})(\dot{s}^2)^\star + \frac{(\dot{s}_{i+1}^2)^\star}{2\Delta s_i}p_i^{\prime T}\!\!\right)\!.
    \label{eq:der_inner_lag_app}
\end{align}
Next compute the inner product between the partial derivative in \eqref{eq:der_inner_lag_app} and $\theta_j$
\begin{align}
    \label{eq:der_inner_product_app}
     &\left(\frac{\partial \mathcal{L}_I (\theta,(\dot{s}^2)^*,\zeta^*)}{\partial \theta_j}\right)^T\theta_j   =  2 \sum_{i=0}^N\left({\zeta}_{vij}^++{\zeta}_{vij}^-\right)(\dot{s}^2)^\star(p_i'\theta_j)^2 \nonumber\\  &\!\!\! + \sum_{i=0}^{N-1}\left({\zeta}_{aij}^+-{\zeta}_{aij}^+\right)\!\!\!\left(\!\!(p_i^{\prime\prime}\theta_j - \frac{1}{2\Delta s_i}p_i^{\prime}\theta_j)(\dot{s}^2)^\star + \frac{(\dot{s}_{i+1}^2)^\star}{2\Delta s_i}p_i^{\prime}\theta_j\!\!\right)\!.
\end{align}
Note that the above expression holds for all joints. Thus, we are left to establish that the previous inner product is positive.  Observe that the terms in the first sum are all positive since ${\zeta}_{vij}^+ \geq 0$ and ${\zeta}_{vij}^- \geq 0$. We next analyze the three possible cases for the terms in the second sums. Note that either ${\zeta}_{vij}^+\geq 0$ and ${\zeta}_{vij}^-\geq 0$, ${\zeta}_{vij}^+= 0$ and ${\zeta}_{vij}^-\geq 0$ or ${\zeta}_{vij}^+= 0$ and ${\zeta}_{vij}^+= 0$ as a joint cannot attain positive and negative acceleration limits at the same time. In the latter case the term becomes zero trivially. If the positive acceleration constraint is active the term reduces to ${\zeta}_{aij}^+\ddot{\overline{q}}_{j}$. Which is positive. If the negative acceleration constraint is active, it reduces to -${\zeta}_{aij}^-\ddot{\underline{q}}_{j}$, which is also positive. This concludes the proof of the result. $\blacksquare$
%




\section{Proof of Theorem \ref{thm:inner_cte}}\label{app:PF1}

Consider the two sets of constraints defined in \eqref{opt:inner}:
\begin{equation}
     \dot{\underline{q}}_{j}\leq p'_i\theta_j\dot{s} \leq \dot{\overline{q}}_{j},\; \forall j=1,\dots,n,
     \label{eq:ctrspd}
\end{equation}
\begin{equation}
     \ddot{\underline{q}}_{j} \leq p''_i\theta_j\dot{s}^2 \leq \ddot{\overline{q}}_{j}\; \forall j=1,\dots,n. 
\label{eq:ctracc}
\end{equation}

Let us define the following value of $\dot{s}$ 
\begin{equation}
    \label{eq:sdagger}
    \dot{s}^{\dagger} \!\!=\!\!\!\!\!  \min_{\substack{j \in\{1,\dots,n\} \\ i\in\{0,\ldots,N\}}}\!\! \left\{\max\left\{\frac{\dot{\overline{q}}_{j}}{p'_i\theta_j},\frac{\dot{\underline{q}}_{j}}{p^\prime_i\theta_j}\right\},\sqrt{\max\left\{\frac{\ddot{\overline{q}}_{j}}{p^{\prime\prime}_i\theta_j},\frac{\ddot{\underline{q}}_{j}}{p{\prime\prime}_i\theta_j}\right\}}\right\}.
\end{equation}
Evaluating $\dot{s}^{-2}$ at $\dot{s}^\dagger$ yields}}
\begin{equation}
    \label{eq:pdagger}
        V^\dagger(\theta) \!\!=\!\!\!\!\!\! \max_{\substack{j \in\{1,\dots,n\} \\ i\in\{0,\ldots,N\}}} \!\!\!\left\{\!\!\left(\!\!\max\left\{\frac{p'_i\theta_j}{\dot{\overline{q}}_{j}},\frac{p^{\prime}_i\theta_j}{\dot{\underline{q}}_{j}}\right\}\!\!\right)^2\!\!\!\!\!,  \max\!\!\left\{\frac{p''_i\theta_j}{\ddot{\overline{q}}_{j}},\frac{p^{\prime\prime}_i\theta_j}{\ddot{\underline{q}}_{j}}\right\}\!\!\!\right\}.
\end{equation}
We next show that $\dot{s}^\dagger$ is a feasible point of \eqref{opt:inner}. First, the two inner max functions ensure only positive terms exist in the list for the min function. Then, a minimum of positive terms is also positive, which ensures that $\dot{s}^\dagger \geq 0$.

Note that $\dot{s}^{\dagger}$ takes one of four forms. Assume that $\dot{s}^{\dagger} = \dot{\overline{q}}_{j^*}/{p'_{i^*}\theta_{j^*}}$, where $i^*, j^*$ are particular indexes that minimize the right hand side of \eqref{eq:sdagger}. 
For any $(i, j)$ pair, if $p^\prime_i\theta_j > 0$ we show that the following holds
\begin{equation}\label{eqn_aux_proof}
    \dot{\underline{q}}_{j}\leq p'_i\theta_j\dot{s}^\dagger \leq \dot{\overline{q}}_{j},\; \forall j=1,\dots,n.
\end{equation}
From the hypothesis of the theorem $(\dot{\underline{q}}_{j} < 0)$ and the fact that $\dot{s}^\dagger>0$ and $p^\prime_i\theta_j>0$, the left hand side of \eqref{eqn_aux_proof} holds. For the right-hand side, note that
\begin{equation}
    p'_i\theta_j \dot{s}^\dagger = p'_i\theta_j\frac{\dot{\overline{q}}_{j^*}}{p'_{i^*}\theta_{j^*}} \leq \dot{\overline{q}}_{j},\; \forall j=1,\dots,n,
\end{equation}
where the inequality follows by the definition of minimum. Indeed, $\dot{\overline{q}}_{j^*}/{p'_{i^*}\theta_{j^*}} \leq \dot{\overline{q}}_{j}/{p'_i\theta_j}$. An analogous analysis holds when $p^\prime_i\theta_j < 0$.

For constraints \eqref{eq:ctracc}, first consider $p^{\prime\prime}_i\theta_j > 0$. We will prove that the following holds
\begin{equation}\label{eqn_aux_proof2}
     \ddot{\underline{q}}_{j} \leq p''_i\theta_j(\dot{s}^\dagger)^2 \leq \ddot{\overline{q}}_{j}\; \forall j=1,\dots,n.
\end{equation}
By hypothesis $(\ddot{\underline{q}}_{j}<0)$ and by the fact that $\dot{s}^\dagger > 0$ the left hand side of \eqref{eqn_aux_proof2} holds. To prove the right hand note that by the definition of minimum $\dot{\overline{q}}_{j^*}/{p'_{i^*}\theta_{j^*}} \leq \sqrt{\ddot{\overline{q}}_{j}/{p''_i\theta_j}}$. Thus
\begin{equation}
     p''_i\theta_j\left(\dot{s}^\dagger\right)= p''_i\theta_j\left(\frac{\dot{\overline{q}}_{j^*}}{p'_{i^*}\theta_{j^*}}\right)^2 \leq \ddot{\overline{q}}_{j}\; \forall j=1,\dots,n. 
\end{equation}

The result for $p^{\prime\prime}_i\theta_j < 0$ is analogous. This proves that $\dot{s}^\dagger\!\!  =\!\! \dot{\overline{q}}_{j^*}/{p'_{i^*}\theta_{j^*}}$ is feasible also for the set of constraints \eqref{eq:ctracc}.

The proof for the other three cases of $\dot{s}^\dagger$ is analogous and will be omitted for brevity. This shows that $\dot{s}^\dagger$ is a feasible point overall. Hence, $V(\theta) \leq V^\dagger(\theta)$. 

To complete the proof for the theorem, we need to show that $V(\theta) = V^\dagger(\theta)$. We argue by contradiction. Assume that $V^\dagger(\theta)> V(\theta)$. Since the cost function in \eqref{opt:inner} monotonously decreases, 
the optimal solution satisfies $\dot{s}^\star= \dot{s}^\dagger + \epsilon$, for some $\epsilon > 0$. We next show that $\dot{s}^\star$ is infeasible thus completing the proof. When $\dot{s}^\star$ has the form $\dot{s}^{\star} = \dot{\overline{q}}_{j^*}/{p'_{i^*}\theta_{j^*}} + \epsilon$. Then
\begin{equation}
    p'_{i^*}\theta_{j^*}\dot{s}^{\star} = p'_{i^*}\theta_{j^*}\left(\frac{\dot{\overline{q}}_{j^*}}{p'_{i^*}\theta_{j^*}} + \epsilon\right) = \dot{\overline{q}}_{j^*} + p'_{i^*}\theta_{j^*}\epsilon > \dot{\overline{q}}_{j^*},
\end{equation}
which means $\dot{s}^{\star}$ is infeasible. An analogous result can be made for the other three forms of $\dot{s}^{\dagger}$. This concludes the proof of the result.

\section*{References}
\vspace{-15pt}
\bibliographystyle{IEEEtran}
\bibliography{cdc2023bib}             

\begin{IEEEbiography}[{\includegraphics[width=1in,height=1.25in,clip,keepaspectratio]{./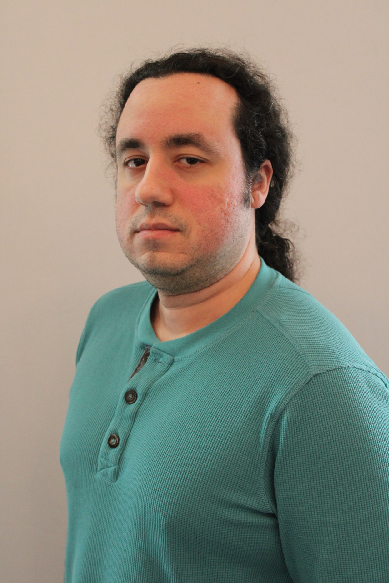}}]
{Jonathan Fried} 
received a B.Sc. degree in control and automation engineering from Universidade Federal do Rio de Janeiro, Rio de Janeiro, Brazil in 2015, M.Sc. in electrical engineering from the same university in 2019, and has been
working toward the Ph.D. degree in the Department of Electrical Computer and Systems Engineering at Rensselaer Polytechnic Institute since January 2022. His research interests include optimization, classic control, robotics, and reinforcement learning.
\end{IEEEbiography}

\vspace{-0.5cm}

\vspace{-0.5cm}

\begin{IEEEbiography}[{\includegraphics[width=1in,height=1.25in,clip,keepaspectratio]{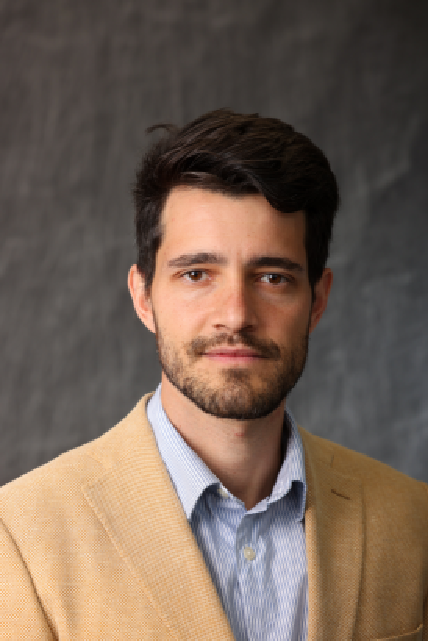}}]
{Santiago Paternain} received the B.Sc. degree in electrical engineering from Universidad de
la República Oriental del Uruguay, Montevideo, Uruguay in 2012, the M.Sc. in Statistics from the Wharton School in 2018 and the Ph.D. in Electrical and Systems Engineering from the Department of Electrical and Systems Engineering, the University of Pennsylvania in 2018. He is currently an Assistant Professor in the Department of Electrical Computer and Systems Engineering at Rensselaer Polytechnic Institute. Prior to joining Rensselaer, Dr.
Paternain was a postdoctoral Researcher at the University of Pennsylvania. His research interests lie at the intersection of machine learning and control of dynamical systems. Dr. Paternain was the recipient of the 2017 CDC Best Student Paper Award and the 2019 Joseph and Rosaline Wolfe Best Doctoral Dissertation Award from the Electrical and Systems Engineering Department at the University of Pennsylvania.
\end{IEEEbiography}

\end{document}